\definecolor{codebg}{cmyk}{0,0,0,0}
\def\@algocf@pre@ruled{\begin{tcolorbox}[colback=codebg,arc=0.3em,boxsep=0em,left=0em, right=0.5em, top=0.5em, bottom=0.5em, boxrule=0.1mm]}%
\def\@algocf@post@ruled{\end{tcolorbox}\vspace*{-1.5em}}%
\renewcommand{\fnum@algocf}{\hspace*{0.5em}\AlCapSty{\AlCapFnt\algorithmcfname} \arabic{algocf}}
\tikzstyle{vertex}=[circle, draw, fill=gray!80!white,thick,scale=1.2]
\tikzstyle{edge}=[draw=black, thick,-]
\tikzstyle{vertex}=[anchor=center, circle, fill=gray, inner sep=1.75]
\theoremstyle{definition}
\newtheorem{definition}{Definition}[section]
\newtheorem{remark}[definition]{Remark}
\theoremstyle{plain}
\newtheorem{lemma}[definition]{Lemma}
\newtheorem{corollary}[definition]{Corollary}
\newtheorem{theorem}[definition]{Theorem}
\newtheorem{problem}[definition]{Problem}
\Crefname{fact}{Fact}{Facts}
\tikzstyle{place}=[circle,draw=black!80,thick,fill=black!80, inner sep=0pt,minimum size=1.5mm]
\newcommand{\mgG}{\gamma}
\renewcommand\phi\varphi
\renewcommand\epsilon\varepsilon
\DeclareMathOperator{\Sym}{Sym}
\DeclareMathOperator{\Aut}{Aut}
\DeclareMathOperator{\Var}{Var}
\DeclareMathOperator{\bip}{bip}
\DeclareMathOperator{\dcup}{\dot{\cup}}
\newcommand{\n}[1]{\overline{#1}}
\newcommand{\Orbi}[2]{{#2}^{#1}}
\newcommand{\xparagraph}[1]{\textbf{#1}}
\title{The Complexity of Symmetry Breaking Beyond Lex-Leader}
\author[1]{Markus Anders}
\author[1]{Sofia Brenner}
\author[2]{Gaurav Rattan}
\affil[1]{TU Darmstadt}
\affil[2]{University of Twente}
\affil[ ]{\small\textit{\{anders,brenner\}@mathematik.tu-darmstadt.de, g.rattan@utwente.nl}}
\newsavebox{\fminibox}
\newlength{\fminilength}
\newcommand{\dCANON}{$d$-\textsc{scanon}\xspace}
\newcommand{\sCANON}{$s$-\textsc{scanon}\xspace}
\newcommand{\dGCANON}{$d$-\textsc{gcanon}\xspace}
\newcommand{\dCANONf}{$d$-\textsc{scanon}$_F$\xspace}
\newcommand{\sCANONf}{$s$-\textsc{scanon}$_F$\xspace}
\newcommand{\dGCANONf}{$d$-\textsc{gcanon}$_f$\xspace}
\newcommand{\sGCANONf}{$s$-\textsc{gcanon}$_f$\xspace}
\begin{document}
\maketitle

\begin{abstract}
Symmetry breaking is a widely popular approach to enhance solvers in constraint programming, such as those for SAT or MIP. Symmetry breaking predicates (SBPs) typically impose an order on variables and single out the lexicographic leader (lex-leader) in each orbit of assignments. Although it is NP-hard to find complete lex-leader SBPs, incomplete lex-leader SBPs are widely used in practice.

In this paper, we investigate the complexity of computing complete SBPs, lex-leader or otherwise, for SAT. Our main result proves a natural barrier for efficiently computing SBPs: efficient certification of graph non-isomorphism. Our results explain the difficulty of obtaining short SBPs for important CP problems, such as matrix-models with row-column symmetries and graph generation problems. Our results hold even when SBPs are allowed to introduce additional variables. We show polynomial upper bounds for breaking certain symmetry groups, namely automorphism groups of trees and wreath products of groups with efficient SBPs.
\end{abstract}

\section{Introduction} \label{sec:intro}
The search space of a constraint program can exhibit a large amount of symmetry. 
This simple yet far-reaching observation forms the core principle behind the use of \emph{symmetry based} approaches in the realm of constraint programming \cite{DBLP:reference/fai/GentPP06, DBLP:series/faia/Sakallah21}. 
Such methods prune the symmetric parts of the search space to save computational costs. 
Ideally, they ensure that at most one solution exists per equivalence class of candidate solutions. 
Over the last two decades, numerous methods have been proposed to exploit symmetries of constraint programs.
In particular, many approaches have been developed for Boolean satisfiability solvers \cite{DBLP:conf/kr/CrawfordGLR96,DBLP:conf/dac/AloulMS03,DBLP:conf/cp/CodishGIS16,DBLP:conf/sat/Devriendt0BD16,DBLP:journals/jsc/JunttilaKKK20,DBLP:conf/cp/KirchwegerS21,DBLP:conf/sat/Devriendt0B17,DBLP:journals/constraints/Sabharwal09,DBLP:conf/tacas/MetinBCK18} as well as mixed integer programming \cite{DBLP:journals/mp/Margot02, DBLP:conf/ipco/OstrowskiLRS08, DBLP:journals/mpc/PfetschR19}.
Symmetry-based solving remains an active and fruitful area of interest, especially from a practical perspective: for example, the defining feature of arguably one of the most successful entries in the SAT competition 2023 was symmetry breaking \cite{breakidkissatcomp, DBLP:journals/jair/BogaertsGMN23}.

How symmetries should be used best remains unclear.
Approaches can be roughly divided into two different categories: in \emph{dynamic} and \emph{static} approaches.
In a \emph{dynamic} approach, symmetries are used during the execution of a solver \cite{DBLP:journals/constraints/Sabharwal09, DBLP:conf/ictai/DevriendtBCDM12, DBLP:conf/sat/Devriendt0B17, DBLP:conf/tacas/MetinBCK18}. 
A typical example is that the solver incorporates a branching rule that makes use of the symmetries directly \cite{DBLP:journals/constraints/Sabharwal09}.

The second approach is the \emph{static} use of symmetries, which is the main focus of this paper.
Here, additional constraints, so-called \emph{symmetry breaking predicates} (SBPs), are added to a given problem instance.
The notion of SBPs was first introduced in the seminal paper of Crawford, Ginsberg, Luks and Roy~\cite{DBLP:conf/kr/CrawfordGLR96}. 
Their goal was to generate polynomial-sized SBPs for SAT formulas in conjunctive normal form (CNF).
However, since their framework is rooted in group theory, many results neatly generalize to other constraint languages. 

The framework of Crawford et al., as well as the majority of the subsequent work in this area, uses so-called \emph{lex-leader} predicates to achieve complete symmetry breaking. 
Using \emph{incomplete} lex-leader predicates is arguably one of the most successful approaches to symmetry breaking in practice \cite{DBLP:conf/dac/AloulMS03, DBLP:conf/sat/Devriendt0BD16}.
On a complexity-theoretic level, however, Crawford et al.~proved that computing a predicate true of only the lex-leader in each equivalence class of Boolean assignments is \NP-hard. 
Subsequent results showed that this even holds true for restricted classes of groups \cite{DBLP:journals/amai/LuksR04}, as well as orders similar to lex-leader \cite{DBLP:conf/cp/KatsirelosNW10, DBLP:journals/corr/abs-2005-08954}. 

One may wonder whether there are \emph{other kinds} of SBPs that are efficiently computable. 
Here, \emph{other kinds} of SBPs simply means that they do \emph{not} make use of a lexicographic ordering of the assignments. 
In principle, choosing any canonical representative among symmetric assignments is permissible, lex-leader or otherwise.
This question is motivated, for instance, by the realm of graph isomorphism (GI). 
There, choosing the lex-leader is also known to be $\NP$-hard \cite{DBLP:conf/stoc/BabaiL83}, and the best theoretical and practical approaches make use of other mechanisms.

Concerning practical symmetry breaking, only a few, though surprisingly different, approaches of generating non-lex-leader SBPs have been explored. 
In \cite{DBLP:journals/amai/FlenerPS09}, the global cardinality constraint \cite{DBLP:conf/aaai/Regin96} is used in conjunction with lex-leader constraints to efficiently handle (particular) wreath symmetry. 
In \cite{DBLP:conf/cp/CodishGIS16}, SAT symmetry breaking constraints for graph problems are produced similarly to the canonical labeling algorithm \textsc{nauty} \cite{DBLP:journals/jsc/McKayP14}.
In \cite{DBLP:journals/mics/Heule19}, minimal SAT symmetry breaking constraints are generated for small groups.

In general, however, the complexity of SBPs remains largely unexplored, even for fairly restricted kinds of symmetries. 
Perhaps the most glaring example is the problem of breaking \emph{row-column symmetries}, which arise in the so-called \emph{matrix models} \cite{DBLP:conf/cp/FlenerFHKMPW02}. 
These models allow the decision variables to be arranged in a matrix such that interchanging any two rows or any two columns is a symmetry of the model. Matrix models arise in multiple areas of constraint programming such as scheduling, combinatorial problems, and design \cite{matrixmodelling}. 
Perhaps the most well-known matrix model is the pigeonhole principle problem, for which it is \NP-hard to compute the lex-leader or similar assignments \cite{DBLP:conf/kr/CrawfordGLR96}.
While the problem of devising SBPs for such models has received much attention \cite{DBLP:conf/cp/FlenerFHKMPW02,DBLP:conf/cp/KatsirelosNW10, DBLP:reference/fai/GentPP06}, the known results do not explain the lack of compact SBPs for matrix models.

\subsection{Our Results} \label{sec:contribution}
The objective of this paper is to further investigate the exact complexity of computing static symmetry breaking predicates. 
\emph{Given a group of symmetries on the variables of a formula, how hard is it to generate a complete symmetry breaking predicate?} 
The ultimate goal of our work is to obtain a classification of symmetry groups, in terms of the complexity of computing SBPs. Such a classification could help inform practitioners as to which cases can be handled easily, and which ones are more challenging. 

In order to simplify the exposition, our setting of choice is that of Boolean satisfiability testing (SAT).
However, in the same vein as \cite{DBLP:conf/kr/CrawfordGLR96}, our results are founded in a general group-theoretic setting, so they should easily transfer to many branches of constraint programming: 
we consider computing symmetry breaking predicates for a given \emph{permutation group}, instead of a particular SAT formula exhibiting such symmetry.

Our results can be divided into \emph{hardness results} and \emph{upper bounds}. The high-level idea for the hardness results can be summarized as follows: We show that if symmetry breaking is feasible for certain expressive groups, such as matrix groups or Johnson actions, then graph isomorphism is in \coNP{}. The containment \GI{} $\in$ $\mathsf{co}$-$\mathsf{NP}$ is a major unresolved problem \cite{KoeblerST93}, even for the restricted case of group isomorphism \cite{DBLP:conf/stoc/Babai16}.
While \GI{} $\in$ $\mathsf{co}$-$\mathsf{NP}$ seems to have no other major complexity theoretic consequences and is seemingly not ``implausible'', it still poses a barrier to compact SBPs.

The idea of our reductions is to encode the input graphs as binary strings in a suitable way, then guess a canonizing permutation, and use the symmetry-breaking constraint to verify that the result is indeed the canonical form: By definition, the symmetry-breaking constraint is true of precisely the canonical forms. The graphs are non-isomorphic exactly when the canonical forms are different. As a strengthening, we show that this holds even when the symmetry breaking constraint uses additional variables as their values can be guessed as well.

We now explain our hardness results in greater detail. 

\xparagraph{Matrix Models.} Our first result tackles the difficulty of breaking row-column symmetries in matrix models. 
As mentioned above, this problem has received much attention in symmetry breaking literature. 

\begin{restatable}{theorem}{gridhard}\label{thm:core}
	Suppose there exists a polynomial time algorithm for generating complete symmetry breaking predicates for row-column symmetries. Then \GI{} $\in$ $\mathsf{co}$-$\mathsf{NP}$ holds, i.e., graph non-isomorphism admits a non-deterministic polynomial time algorithm. 
\end{restatable}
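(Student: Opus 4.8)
The plan is to exhibit a nondeterministic polynomial-time algorithm for graph non-isomorphism, using the hypothesized SBP generator as a device that ``canonicalizes'' $0/1$-matrices modulo independent row and column permutations. The starting observation is that a complete symmetry breaking predicate for row-column symmetries is exactly such a canonicalizer: it is true of precisely one assignment in each $S_m \times S_n$-orbit of $m \times n$ Boolean matrices. So it suffices to reduce graph isomorphism to orbit-equivalence of $0/1$-matrices under $S_m \times S_n$. For this I would invoke the folklore reduction from \GI{} to ``$0/1$-matrix equivalence under row/column permutations'': encode a graph $G$ by the vertex--edge incidence matrix of $G$ (equivalently, the biadjacency matrix of the subdivision of $G$), after a standard preprocessing that makes $G$ connected of minimum degree at least $3$ with prescribed numbers of vertices and edges. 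A degree count then forces any isomorphism of subdivisions to respect the bipartition, so $G_1 \cong G_2$ iff the two incidence matrices lie in the same $S_m \times S_n$-orbit. Thus, from a graph non-isomorphism instance $(G_1, G_2)$ the algorithm first, in deterministic polynomial time, either outputs ``non-isomorphic'' (if the graphs already differ in their number of vertices or edges) or produces matrices $M_1, M_2 \in \{0,1\}^{m \times n}$ with $m, n = \mathrm{poly}(|G_1| + |G_2|)$ and $M_1 \sim M_2 \iff G_1 \cong G_2$, where $\sim$ denotes lying in the same $S_m\times S_n$-orbit.

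By hypothesis there is a polynomial-time algorithm that on input $(m,n)$ outputs a complete symmetry breaking predicate $\phi = \phi_{m,n}$ for $S_m \times S_n$ acting on the variables $(x_{ij})_{i \in [m],\, j \in [n]}$, possibly using auxiliary variables $Y$; completeness means that each $S_m \times S_n$-orbit of assignments to $(x_{ij})$ contains exactly one assignment that extends to a model of $\phi$, which we call the canonical form of the orbit. Having computed $\phi$ (polynomial time and size), the algorithm guesses, for $k \in \{1,2\}$, a pair $(\sigma_k, \tau_k) \in S_m \times S_n$ and an assignment $\beta_k$ to $Y$; it forms $C_k := (\sigma_k, \tau_k) \cdot M_k$, checks that the assignment $x := C_k$, $Y := \beta_k$ satisfies $\phi$, and accepts iff $C_1 \ne C_2$. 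Every step is polynomial, noting that a permutation on $[m]$ or $[n]$ is encoded in $O((m+n)\log(m+n))$ bits.

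For correctness: if some computation path accepts, then for each $k$ the matrix $C_k$ lies in the orbit of $M_k$ (witnessed by $(\sigma_k, \tau_k)$) and extends to a model of $\phi$ (witnessed by $\beta_k$), so by completeness $C_k$ is the canonical form of the orbit of $M_k$; since $C_1 \ne C_2$, the two matrices lie in distinct orbits, hence $M_1 \not\sim M_2$ and $G_1 \not\cong G_2$. Conversely, if $G_1 \not\cong G_2$ then $M_1, M_2$ lie in distinct orbits whose canonical forms $C_1^\ast, C_2^\ast$ therefore differ; guessing group elements with $(\sigma_k, \tau_k) \cdot M_k = C_k^\ast$ together with satisfying assignments $\beta_k$ to $Y$ yields an accepting path. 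Hence the algorithm decides graph non-isomorphism, i.e., \GI{} $\in \coNP$. This also yields the promised strengthening: auxiliary variables cause no difficulty, since their values are simply guessed along with the canonizing permutations.

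The main obstacle is the first step --- the encoding must turn the \emph{diagonal} action underlying graph isomorphism into the \emph{product} action of $S_m \times S_n$. This is precisely why the naive adjacency-matrix encoding fails: independently permuting the rows and columns of an adjacency matrix gives an equivalence strictly coarser than graph isomorphism (it is isomorphism of the associated bipartite double cover, which is not a complete isomorphism invariant). The incidence-matrix-of-subdivision encoding, together with the minimum-degree argument that pins down the bipartition, is what makes graph isomorphism coincide with row-column orbit-equivalence; everything downstream is a routine ``guess a canonizing permutation and an auxiliary assignment, then verify'' argument exploiting that a complete SBP is true of exactly the canonical forms.
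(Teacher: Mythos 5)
Your proposal is correct and follows essentially the same route as the paper's own proof: both encode a graph by the biadjacency matrix of its vertex--edge incidence bipartite graph, nondeterministically guess canonizing row/column permutations (and, in your more general version, values for any auxiliary variables), use the SBP to certify that each guessed image is the canonical representative of its $S_n \times S_m$-orbit, and accept iff the two canons differ. One small remark: the preprocessing to connected minimum-degree-$3$ graphs is not needed here, since the $S_n \times S_m$ action already respects the vertex/edge bipartition by construction---for simple graphs with matching numbers of vertices and edges, any $(\sigma,\tau) \in S_n \times S_m$ mapping one incidence matrix to another automatically carries one edge set onto the other (each column has exactly two ones), so row/column equivalence of incidence matrices coincides with graph isomorphism without any degree normalization or subdivision argument.
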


Our theorem explains the difficulty of obtaining compact symmetry breaking predicates for matrix models, in the sense that it would imply polynomial time algorithms for certifying graph non-isomorphism. 
Section~\ref{sec:gridthm} contains a detailed description of our result. 

\xparagraph{Johnson Actions.} 
We identify yet another class of groups for which symmetry breaking is hard, namely the $(k,t)$-\emph{Johnson groups}. These are symmetric groups $\Sym(k)$ acting on $t$-subsets of $[k]$ for fixed $t<k$. 
It is well-known that these actions form an important sub-case of Babai's quasi-polynomial algorithm for graph isomorphism \cite{DBLP:conf/stoc/Babai16}. 
\begin{restatable}{theorem}{johnson}\label{thm:johnson}
	Let $t>1$ be a fixed positive integer. Suppose that we can generate complete symmetry breaking predicates for all $(k,t)$-Johnson groups in polynomial time (in terms of the domain size). Then $\GI{} \in \coNP{}$  holds.
\end{restatable}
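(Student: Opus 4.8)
The plan is to follow the template of \Cref{thm:core}: encode graphs as Boolean strings on which the Johnson group acts, and certify non-isomorphism by nondeterministically guessing canonizing permutations together with the values of the auxiliary variables, checking via the polynomial-size complete symmetry breaking predicate that the guessed strings are the canonical representatives of their orbits.

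First I would fix the encoding. A Boolean assignment to the $\binom{k}{t}$ variables of the $(k,t)$-Johnson action is the same datum as a $t$-uniform hypergraph on $[k]$, and $\Sym(k)$ acts on it precisely by vertex relabeling; hence two assignments lie in the same orbit if and only if the corresponding hypergraphs are isomorphic. So it suffices to exhibit a polynomial-time, \emph{orbit-faithful} reduction from \GI{} to isomorphism of $t$-uniform hypergraphs on $O(m)$ vertices, where $m$ is the number of vertices of the input graphs. Using that \GI{} is polynomial-time equivalent to isomorphism of $3$-regular graphs, I would assume $G_1,G_2$ are $3$-regular on a common vertex set $[m]$ with $m\geq 4$ (if they have different orders they are trivially non-isomorphic). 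Given such a $G$, introduce ``marker'' vertices $W=\{w_1,\dots,w_{t-2}\}$ (with $W=\emptyset$ when $t=2$), set $k:=m+t-2$ (so $t<k$), let $H(G)$ be the $t$-uniform hypergraph on $[k]=[m]\dcup W$ with hyperedges $\{u,v\}\cup W$ for $\{u,v\}\in E(G)$, and let $x_G\in\{0,1\}^{\binom{k}{t}}$ be the assignment corresponding to $H(G)$. In $H(G)$ every marker has hyperdegree $|E(G)|=3m/2$ while every vertex of $[m]$ has hyperdegree $3$, and $3m/2>3$; therefore every isomorphism $H(G_1)\to H(G_2)$ preserves the partition $[m]\dcup W$ and so restricts to an isomorphism $G_1\to G_2$, while conversely every isomorphism $G_1\to G_2$ extends by the identity on $W$. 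Hence $G_1\cong G_2$ iff $H(G_1)\cong H(G_2)$ iff $x_{G_1},x_{G_2}$ lie in the same $(k,t)$-Johnson orbit, and since $t$ is fixed, $\binom{k}{t}=O(m^t)$, so the encoding is polynomial-time computable.

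Then I would invoke the hypothesized algorithm on the $(k,t)$-Johnson group to obtain, in time polynomial in the domain size $\binom{k}{t}$ --- hence polynomial in $m$ --- a complete symmetry breaking predicate $\psi(\vec{x},\vec{y})$, with Johnson variables $\vec{x}$ and auxiliary variables $\vec{y}$; by completeness, every $\Sym(k)$-orbit of $\vec{x}$-assignments contains exactly one assignment that, for some value of $\vec{y}$, satisfies $\psi$ --- its canonical form. The $\coNP{}$ algorithm for graph non-isomorphism, on input $G_1,G_2$ (preprocessed to the $3$-regular case as above), computes $x_{G_1},x_{G_2}$ and $\psi$, guesses permutations $\pi_1,\pi_2\in\Sym(k)$ and auxiliary assignments $\vec{y}_1,\vec{y}_2$, and accepts iff $\psi(\pi_i\cdot x_{G_i},\vec{y}_i)$ holds for $i=1,2$ while $\pi_1\cdot x_{G_1}\neq\pi_2\cdot x_{G_2}$; this is nondeterministic polynomial time because $\psi$ has polynomial size. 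For correctness: if $G_1\not\cong G_2$ then $x_{G_1},x_{G_2}$ lie in distinct orbits, so their two distinct canonical forms are reached by suitable $\pi_i$ and witnessed by suitable $\vec{y}_i$, and some branch accepts; if $G_1\cong G_2$ then $x_{G_1},x_{G_2}$ lie in one orbit, so any $\pi_i\cdot x_{G_i}$ satisfying $\psi$ (for some $\vec{y}_i$) equals that orbit's unique canonical form, forcing $\pi_1\cdot x_{G_1}=\pi_2\cdot x_{G_2}$ and rejection on every branch. This gives $\GI{}\in\coNP{}$.

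The main obstacle is making the reduction to $t$-uniform hypergraphs \emph{orbit-faithful}: one needs every isomorphism between the output hypergraphs to fix the marker set $W$ setwise, not merely that isomorphic graphs map to isomorphic hypergraphs. Reducing to $3$-regular graphs and exploiting the degree gap $3m/2>3$ is what pins the markers down; for $t=2$ there is nothing to prove, since the Johnson action is literally the action of $\Sym(m)$ on graphs. The remaining ingredients --- the polynomial bound on $\binom{k}{t}$ when $t$ is fixed, and the fact that the $\coNP{}$ machine is also free to guess the auxiliary variables --- are exactly as in \Cref{thm:core}.
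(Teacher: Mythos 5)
Your proof is correct, but it takes a genuinely different route from the paper's. The paper derives Theorem~\ref{thm:johnson} from the stronger Theorem~\ref{theo:johnson3}, whose encoding works on \emph{arbitrary} graphs $\Gamma$ directly: it builds a $t$-uniform structure $R_\Gamma$ on $|V(\Gamma)|+t$ universe elements, adding $t-2$ ``level'' vertices $v_1,\dots,v_{t-2}$ \emph{plus two extra tags} $a,b$, with three hyperedge types. The tag $b$ is the unique element of hyperdegree $1$; it anchors $a$ (and hence the $v_i$'s) via the single hyperedge $\{v_1,\dots,v_{t-2},a,b\}$, and a careful degree computation then separates $\{r_u\}$, $\{v_i\}$, $a$ and $b$ even in the presence of isolated vertices or arbitrary degree sequences. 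That extra machinery is needed because the paper proves the stronger conclusion $\dGCANON\in\NP$ (and invokes Lemma~\ref{lem:dcanonnp}), for which one must track the \emph{ordered} canonical graph, not merely distinguish orbits.

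You instead preprocess $\GI$ to the $3$-regular case, which collapses all the degree bookkeeping: the $t-2$ markers automatically have hyperdegree $3m/2 > 3$, so the partition $[m]\dcup W$ is preserved by every isomorphism of the output hypergraphs with a one-line argument, and you need only one hyperedge type and no $a,b$ tags. The tradeoff is that you rely on the external (standard, but worth citing) reduction $\GI\le_p 3\text{-regular }\GI$, and you prove $\GI\in\coNP$ directly rather than the stronger $\dGCANON\in\NP$ statement. Both establish Theorem~\ref{thm:johnson}; your coNP verifier --- guess $\pi_1,\pi_2\in\Sym(k)$ together with the auxiliary assignments $\vec{y}_1,\vec{y}_2$, check the SBP on both, accept iff the images differ --- is exactly the mechanism used in Theorem~\ref{lem:gridhard} and Theorem~\ref{theo:johnson2}, so the auxiliary-variable handling matches the paper's.
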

Section~\ref{sec:johnson} contains a formal description of this result. In fact, it follows from Theorem~\ref{theo:johnson3}, which proves a stronger statement.

\xparagraph{Certificates for Canonization.} 
We strengthen our hardness results of Theorem~\ref{thm:core} and Theorem~\ref{thm:johnson} as follows.
We allow an algorithm to produce more expressive SBPs:
\begin{enumerate}
	\item The SBP can be given as a Boolean circuit.
	\item The SBP is allowed to \emph{introduce additional variables}. Essentially, this gives the predicate access to additional non-determinism. 
	The SBP may introduce an arbitrary number of additional variables, as long as the overall size is polynomial. 
\end{enumerate}

Despite allowing more powerful SBPs, we conclude a stronger hardness implication: an efficient algorithm for such predicates implies an \emph{efficiently verifiable graph canonical form} (see Theorem~\ref{lem:gridhard} and Theorem~\ref{theo:johnson2}). 
Note that an efficient verifier for graph canonical forms implies an efficient verifier for graph non-isomorphism (Lemma~\ref{lem:dcanonnp}), but the converse is unknown.
For this result, we observe that SBPs for a permutation group $G$ on the domain $[n]$ essentially solve a particular \emph{decision version} of the \emph{string canonization} problem w.r.t.~$G$ on strings of length $n$. 
String canonization is a fundamental problem of interest in the graph isomorphism community \cite{DBLP:conf/stoc/BabaiL83,DBLP:conf/stoc/Babai16,DBLP:conf/stoc/Babai19}.
Section~\ref{sec:dcanon} contains a detailed description.
Moreover, we prove that the hardness results also hold for all subgroups of polynomial index (see Lemma~\ref{lemma:index}).

\xparagraph{Quasi-Polynomial Upper Bound.}
Realizing that symmetry breaking reduces to string canonization allows us to express an upper-bound on the size of \emph{circuit} SBPs for general permutation groups. 
The result is mainly of theoretical interest, but we believe that this could have useful consequences in SBP heuristic design. 
The theorem immediately follows from the quasi-polynomial time algorithm of Babai \cite{DBLP:conf/stoc/Babai19}, see Section~\ref{sec:dcanon} for more details. 
\begin{restatable}{theorem}{quasiupper}\label{thm:quasiupper}
	Given a permutation group $G \leq \Sym(n)$, there is a quasi-polynomial time (in $n$) algorithm producing a complete symmetry breaking circuit of quasi-polynomial size. 
\end{restatable}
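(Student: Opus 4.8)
The plan is to obtain Theorem~\ref{thm:quasiupper} as a direct consequence of Babai's quasi-polynomial canonization algorithm \cite{DBLP:conf/stoc/Babai19}, using the correspondence between complete symmetry breaking and canonical orbit representatives developed in Section~\ref{sec:dcanon}. Recall that, for $G \leq \Sym(n)$ acting coordinatewise on $\{0,1\}^n$, a complete symmetry breaking predicate is precisely (the indicator of) a set $R \subseteq \{0,1\}^n$ that meets every $G$-orbit in exactly one point. Such an $R$ is obtained from any \emph{canonical form map} $\mathrm{CF}_G \colon \{0,1\}^n \to \{0,1\}^n$ — that is, a map for which $\mathrm{CF}_G(x)$ lies in the $G$-orbit of $x$ and $\mathrm{CF}_G(x^g) = \mathrm{CF}_G(x)$ for all $g \in G$ — by setting $R = \{x : x = \mathrm{CF}_G(x)\}$. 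Hence it suffices to build, from a generating set of $G$, a circuit that computes (the graph of) some such $\mathrm{CF}_G$ together with an equality test against its own input.

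First I would invoke the main result of \cite{DBLP:conf/stoc/Babai19}: given $G \leq \Sym(n)$ by generators and a string $x$ of length $n$ over a binary alphabet, a canonical form of $x$ under $G$ in the above sense — equivalently, a canonical labeling coset from which the canonical string is read off — can be computed deterministically in time $2^{O((\log n)^{c})}$ for an absolute constant $c$. Restricting to the alphabet $\{0,1\}$, this is exactly a canonical form map $\mathrm{CF}_G$ on $\{0,1\}^n$ computable in quasi-polynomial time. Second, I would compile this into a circuit: hard-wiring the given generators of $G$ yields a deterministic algorithm $A_G$ that on input $x \in \{0,1\}^n$ outputs $\mathrm{CF}_G(x)$ in time $T(n) = 2^{O((\log n)^{c})}$, and by the standard simulation of a time-$T$ computation by a Boolean circuit there is a circuit $C'_G$ of size $\operatorname{poly}(T(n)) = 2^{O((\log n)^{c'})}$ computing $x \mapsto \mathrm{CF}_G(x)$, which can itself be produced in time $\operatorname{poly}(T(n))$. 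Conjoining an equality check between $x$ and the output of $C'_G$ gives the desired complete symmetry breaking circuit, and the overall procedure (read generators of $G$; run the hard-wired Babai algorithm to emit $C'_G$; append the equality test) runs in quasi-polynomial time and outputs a circuit of quasi-polynomial size.

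I do not expect a genuine obstacle: the entire content resides in Babai's theorem together with the reduction of Section~\ref{sec:dcanon}. The two points that merely need checking are that \cite{DBLP:conf/stoc/Babai19} indeed yields a bona fide canonical form function and not only an isomorphism test — which is precisely the upgrade from \cite{DBLP:conf/stoc/Babai16} to \cite{DBLP:conf/stoc/Babai19} — and that the routine algorithm-to-circuit translation preserves the quasi-polynomial bounds. The one point worth stating carefully is the quantification: the group $G$ is part of the input to the producing algorithm, so both the construction time and the resulting circuit size are bounded by $2^{\operatorname{polylog}(n)}$ uniformly, independently of which $G \leq \Sym(n)$ is given.
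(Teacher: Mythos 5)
Your proof is correct and follows essentially the same route as the paper: both reduce to Babai's quasi-polynomial string canonization algorithm \cite{DBLP:conf/stoc/Babai19} and then compile the resulting canonical-form test into a Boolean circuit of quasi-polynomial size. The paper phrases this through the decision version \dCANON{} of string canonization, which is exactly your equality check $x = \mathrm{CF}_G(x)$; your writeup merely spells out the algorithm-to-circuit translation and uniformity over $G$ in more detail than the paper's terse derivation.
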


We complement these results concerning \emph{hard} families by focusing on polynomial upper bounds, i.e., the question for which families of groups symmetry breaking is \emph{easy}:

\xparagraph{Polynomial Upper Bounds.} 
In Section~\ref{sec:upperbounds}, we examine how group-theoretic structure can help to design SBPs. 
Our results show how we can assemble SBPs for a group from the SBPs of its constituents, in context of natural operations such as disjoint direct products and wreath products. This extends the results of \cite{Grayland2009}, where the existence of \emph{lex-leader} constraints for constituents is assumed to assemble constraints for direct products and wreath products. (The paper also treats cyclic, dihedral and alternating groups.)

The following theorem is the main consequence of our results in this section.

\begin{restatable}{theorem}{treeintro}\label{thm:tree-intro}
	Assume that $G \leq \Sym(n)$ is the automorphism group of a tree $T$. Then $G$ admits a complete symmetry breaking predicate of linear size. Given the tree $T$, it can be computed in polynomial time.
\end{restatable}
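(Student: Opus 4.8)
The plan is to combine the classical recursive description of tree automorphism groups with the composition results of Section~\ref{sec:upperbounds}. \textbf{Decomposing $\Aut(T)$:} first root $T$ at its centre, which is either a single vertex or a single edge; in the edge case $\Aut(T)$ is either the direct product of the automorphism groups of the two rooted halves, or — if those halves are isomorphic as rooted trees — the wreath product of one half's automorphism group with $\Sym(2)$, so in all cases we reduce to rooted trees. For a tree rooted at $\rho$, partition the children of $\rho$ into classes of mutually isomorphic rooted subtrees; the isomorphism types of all rooted subtrees of $T$, and hence these classes, can be computed in linear time by the AHU algorithm. If the classes are represented by rooted trees $S_1,\dots,S_k$ occurring with multiplicities $m_1,\dots,m_k$, then
\[
  \Aut(T)\;\cong\;\prod_{j=1}^{k}\bigl(\Aut(S_j)\wr\Sym(m_j)\bigr),
\]
where the $j$-th factor acts imprimitively on the vertices of the $j$-th class, $\Sym(m_j)$ permuting the $m_j$ isomorphic blocks and $\Aut(S_j)$ acting inside each block. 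Recursing on the $S_j$ down to one-vertex trees (trivial group) exhibits $\Aut(T)$ as an iterated application of disjoint direct products and imprimitive wreath products over trivial and symmetric groups, the recursion tree being essentially $T$ itself.

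\textbf{Assembling a complete SBP:} using the composition lemmas of Section~\ref{sec:upperbounds}, build the predicate bottom-up along this decomposition — for a disjoint direct product conjoin complete SBPs for the factors, and for an imprimitive wreath product $H\wr\Sym(m)$ conjoin a copy of a complete SBP for $H$ inside each of the $m$ blocks together with a constraint forcing the now-internally-canonical blocks to appear in non-decreasing order for a fixed total order on block values (for Boolean blocks this is lexicographic comparison, which for singleton blocks degenerates to the elementary complete SBP $\bigwedge_{i<m}(\lnot x_i\lor x_{i+1})$ of $\Sym(m)$). By correctness of the composition lemmas, the resulting predicate $\varphi_T$ holds precisely of the labellings that are ``sorted everywhere'': for every internal vertex $v$ and every two consecutive children $c,c'$ of $v$ in one isomorphism class, the canonical depth-first encoding of the labelled subtree at $c$ is lexicographically at most that at $c'$. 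This recursive normal form retains exactly one labelling per $\Aut(T)$-orbit, so $\varphi_T$ is a complete SBP, and finding the centre, computing the AHU labelling, and writing down $\varphi_T$ can all be done in polynomial time.

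\textbf{The linear size bound is the main obstacle.} Implemented directly, comparing two consecutive sibling subtrees costs size linear in their number of vertices, and on trees whose automorphism group is a deeply nested wreath product — the extreme case being the complete binary tree, where $\Aut(T)$ is an iterated wreath product of $\Sym(2)$ with itself about $\log n$ times — these costs pile up (a vertex can be compared against a ``twin'' once for every ancestor at which it lies in a non-final block), giving only an $O(n\log n)$ predicate. The hard part is to shave this logarithmic factor, i.e., to realise the ``sorted everywhere'' condition while charging only $O(1)$ auxiliary variables and clauses to each vertex. A natural approach attaches to every vertex $v$ a constant-size flag recording whether the labelled subtree at $v$ is less than, equal to, or greater than the labelled subtree rooted at the vertex corresponding to $v$ — under a fixed isomorphism of the underlying unlabelled shapes — in the next block of $v$'s sibling class; this flag is computed from $x_v$, the matching label in the twin block, and the flags at the children of $v$ using $O(\deg v)$ clauses, and the sibling order at the parent of $v$ is enforced by a single clause. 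The delicate points are to pin down these twin-identifications compatibly through subtrees with nontrivial automorphisms, and to organise the flag recurrences so that each vertex really carries only $O(1)$ flags while $\varphi_T$ stays complete; this then bounds the total size by $O(\sum_v\deg v)=O(n)$.
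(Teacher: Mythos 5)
Your decomposition of $\Aut(T)$ into iterated disjoint direct products and wreath products $(\cdot)\wr\Sym(m)$ over the AHU isomorphism classes of rooted subtrees, assembled bottom-up via the composition lemmas of Section~\ref{sec:upperbounds}, is exactly the paper's route: the paper cites Jordan's structure theorem for tree automorphism groups and then combines the disjoint direct decomposition lemma with the wreath-product lemma for symmetric top groups, deferring the size estimate to the accompanying remark.

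The obstacle you identify and explicitly leave unresolved is, however, a genuine gap, and it is also the weak point in the paper's one-line justification of the linear bound. Invoking the wreath-product size remark ``inductively'' does not by itself give $O(n)$: the hidden constant in ``$s(\gamma)\in O(n)\Rightarrow s(\nu)\in O(nm)$'' grows additively with each level of nesting, because the lex constraint attached to an internal vertex $v$ contributes $\Theta(|T_v|)$, and summing $|T_v|$ over internal vertices already gives $\Theta(n\log n)$ for a balanced binary tree. Your sketched flag scheme does not close this either. The flag you attach at a vertex $w$ compares the labelled subtree $T_w$ with the subtree of $w$'s \emph{next sibling} $w'$, but the recurrence you want at $w$'s parent needs the comparison of a child $c$ of $w$ against the \emph{matching child} $\phi(c)$ of $w'$ under the fixed shape isomorphism; the flag stored at $c$ instead compares $T_c$ with $c$'s own next sibling inside $T_w$, which is a different pair of subtrees. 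Unwinding this, a vertex may still need one fresh comparison per ancestor with a nontrivial sibling class, which is precisely where the $\log n$ factor reenters. So the ``delicate points'' you acknowledge are not mere technicalities but the missing step: an $O(1)$-flags-per-vertex scheme would have to share or amortise comparisons across levels (or one would need a different composition argument), and neither your sketch nor the paper's terse appeal to the size remark actually establishes the claimed $O(n)$ bound.
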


Automorphism groups of trees are special cases of so-called wreath products.
Such groups naturally occur, for example, whenever models exhibit hierarchical structure.
Intuitively, the structure can be split into parts with the same symmetry group (the \emph{base group}), which are permuted by the so-called \emph{top group}. Essentially, we combine symmetry breaking constraints for the base group and the top group to a symmetry breaking constraint for the wreath product by using the predicate for base group to make every part canonical, and the constraint of the top group to fix an ordering of the parts. 
For the general case of wreath products the problem is far more technical, but we obtain the following result (see Section~\ref{sec:upperbounds} for details). 
\begin{restatable}{theorem}{wreath}\label{thm:wreathproducts}
	Let $G \leq \Sym(n)$ and $H \leq \Sym(m)$ be permutation groups. Assume that a complete symmetry breaking circuit for $G$ can be computed in polynomial time. Moreover, suppose that for every partition $P$ of $[m]$, the partition stabilizer $S$ of $P$ in $H$ and a complete symmetry breaking circuit for $S$ can be computed in polynomial time. Then there is a complete symmetry breaking circuit for the wreath product $W\coloneqq G \wr H$ that can be computed in polynomial time.
\end{restatable}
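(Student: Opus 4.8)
The plan is to build the symmetry breaking circuit for $W = G \wr H$ in two stages, following the informal description of the tree case: first make every one of the $m$ blocks canonical using the circuit for $G$, and then fix a canonical ordering of the resulting blocks using circuits for suitable subgroups of $H$. Fix the imprimitive action of $W$ on $[n]\times[m]$ and regard an assignment $\vec{x}\in\{0,1\}^{nm}$ as a tuple $(\vec{x}^{(1)},\dots,\vec{x}^{(m)})$ of blocks $\vec{x}^{(i)}\in\{0,1\}^n$, where $H$ permutes the blocks and a copy of $G$ acts on each. Let $\psi_G$ be the given complete symmetry breaking circuit for $G$ (polynomial size, polynomial-time computable), let $S\subseteq\{0,1\}^n$ be the set of $\psi_G$-canonical strings, and fix an arbitrary total order on $S$, say the lexicographic one. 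The first part of our circuit is the conjunction $\bigwedge_{i=1}^{m}\psi_G(\vec{x}^{(i)})$, assembled from $m$ disjoint copies of $\psi_G$; it accepts exactly the ``block-reduced'' assignments, those all of whose blocks are $\psi_G$-canonical. A short orbit-chasing argument (using $G^m\le W$) shows that every $W$-orbit contains block-reduced assignments, and that within a fixed $W$-orbit the block-reduced assignments are precisely the elements of the $H$-orbit, with $H$ permuting coordinates, of the coloring $\chi\colon[m]\to S$ sending $i$ to the $\psi_G$-canonical form of $\vec{x}^{(i)}$. It therefore remains to single out exactly one coloring in each $H$-orbit on $S^m$.

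For this I would process the (at most $m$) values occurring in a coloring in increasing order. Let $C_1,\dots,C_K\subseteq[m]$ be the level sets of $\chi$, where $C_r$ consists of the positions carrying the $r$-th smallest value; the number $K$ and the values themselves are invariants of the $H$-orbit. The second part of the circuit requires that $\vec{1}_{C_1}$ be canonical for $\psi_H$, and that for each $2\le j\le K-1$ the indicator vector $\vec{1}_{C_j}$ be canonical for the complete symmetry breaking circuit of the stabilizer in $H$ of the partition $\{C_1,\dots,C_{j-1},[m]\setminus(C_1\cup\dots\cup C_{j-1})\}$; the last level set $C_K$ is then forced. Here the partition stabilizer is read in the color-preserving sense, i.e.\ it is $\bigcap_{i<j}\Stab_H(C_i)$ --- this is the reading of the hypothesis that makes the argument go through, and with it $H$ itself is the stabilizer of the trivial partition $\{[m]\}$ and, crucially, a partition stabilizer of a partition stabilizer of $H$ is again a partition stabilizer of $H$ (of the common refinement), so we never leave the class of groups the hypothesis lets us handle. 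Correctness --- exactly one coloring per $H$-orbit --- follows by induction on $j$: after imposing the first $j$ conditions, the colorings that survive within a fixed $H$-orbit on $S^m$ constitute a single orbit of the subgroup $\bigcap_{i\le j}\Stab_H(C_i^{*})$, where $C_i^{*}$ is the unique canonical $i$-th level set produced so far, and the circuit for this group then selects exactly one candidate for the $(j+1)$-st level set; once all level sets are determined so is the coloring.

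The genuine difficulty is not correctness but uniformity: the subgroups whose symmetry breaking circuits we invoke depend on the input $\vec{x}$, whereas the theorem asks for a single circuit. I would dispatch this as follows. By hypothesis there is a polynomial-time algorithm $\mathcal{A}$ that, given a partition $P$ of $[m]$, outputs both $\Stab_H(P)$ and a symmetry breaking circuit for it of size bounded by a fixed polynomial in $m$; hence the map $(P,\vec{w})\mapsto\mathrm{eval}(\mathcal{A}(P),\vec{w})$ is polynomial-time computable and thus realized by a single polynomial-size circuit (essentially a universal circuit composed with $\mathcal{A}$). We use one such gadget for each $j\le m$, feeding it the partition and the sub-assignment $\vec{1}_{C_j}$ read off from $\vec{x}$ by comparison and sorting subcircuits, and conjoin the outputs with the block-canonicity check. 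The resulting circuit has size polynomial in $nm$, and the construction --- assembling $\psi_G$, $\mathcal{A}$, the comparison subcircuits, and a universal circuit into one object --- runs in polynomial time. What remains is routine bookkeeping: fixing the exact wreath-product action and index conventions, and checking the degenerate cases (empty level sets, which are handled automatically since the all-zero vector is the unique representative of its own orbit; repeated block values; and partitions with equal-sized parts, for which the color-preserving convention is exactly what preserves the nested-orbit invariant used in the induction).
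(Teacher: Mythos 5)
Your proof is correct and follows the same two-phase strategy as the paper's: first force every block to be a canonical representative under~$G$, then canonize the resulting map $[m]\to\{0,1\}^n$ under the~$H$-action using iterated partition stabilizers, and finally compile the polynomial-time decision procedure into a circuit. The refinement scheme, however, is genuinely different. The paper iterates over the $n$ coordinate positions inside a block: at step~$i$ it forms the length-$m$ string $c_i(\theta)$ of $i$-th bits of all blocks, requires it to satisfy the symmetry breaking circuit for the stabilizer~$S_{i-1}$ of the partition of~$[m]$ induced by the first $i-1$ coordinates, and then refines to~$P_i$. You instead iterate over the at most~$m$ distinct canonical block values, ordered say lexicographically: at step~$j$ you require the indicator vector $\vec{1}_{C_j}$ of the $j$-th level set to be canonical for the stabilizer of the already-fixed level sets. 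Both are iterated refinement arguments and both are correct. The paper's version gives a slightly more uniform circuit --- $n$ checks of identical shape, no sorting --- whereas yours needs comparison and sorting gadgets to extract the level sets from the input, but is conceptually closer to the informal picture ``canonize the coloring of the blocks''. One small remark: your caution about the ``color-preserving'' reading of partition stabilizer is well placed but unnecessary --- the paper's Section~2.2 already defines the stabilizer of an ordered partition as the intersection of the setwise stabilizers of its parts, which is exactly the reading your induction needs, and this is also what makes partition stabilizers of partition stabilizers again partition stabilizers. Your treatment of the $j>K$ degenerate cases (the all-zero indicator being the sole element, hence the canonical representative, of its orbit) is correct and matches the implicit handling in the paper.
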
 

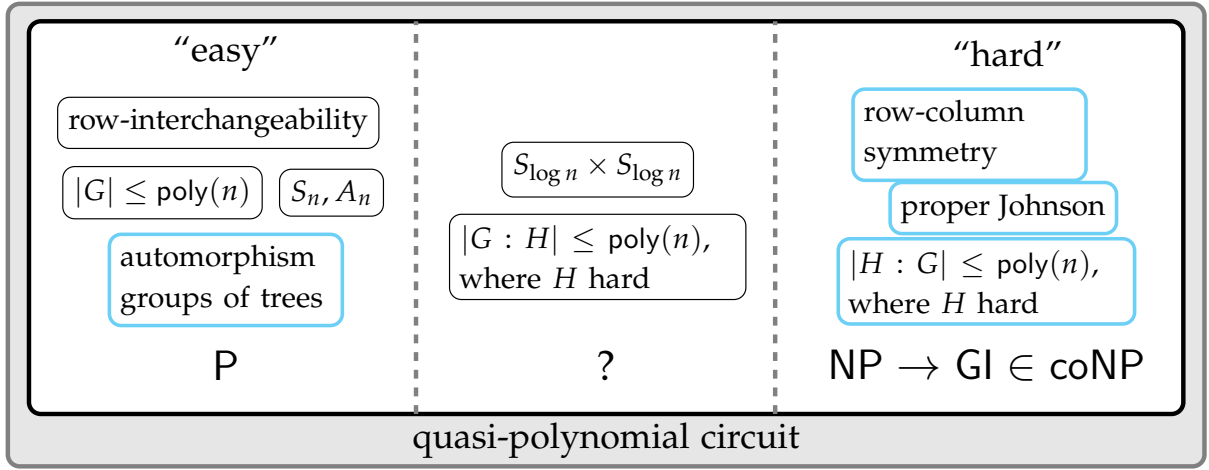
\begin{figure}[t]
	\centering
	\resizebox{\textwidth}{!}{
	\begin{tikzpicture}[yscale=0.833]
		\draw[draw=black!50,fill=black!10,rounded corners,line width=1.5pt] (-1.33-0.25, -1.25) rectangle (12.33+0.25, 5.125+0.25) {};
		\draw[rounded corners,fill=white,line width=1.5pt] (-1.33, -0.5) rectangle (12.33, 5.125) {};
		\node[] at (5.5,-0.87) {\large quasi-polynomial circuit};

		\begin{scope}
			\node[] at (1,4.66) {\large ``easy''};

			\node[draw, rounded corners] at (0+0.25,3-0.33) {\small $|G| \leq \poly(n)$};
			\node[draw=cyan!50, rounded corners,very thick,text width=2.5cm] at (0.75+0.25,1.75-0.33) {\small automorphism groups of trees};
			\node[draw, rounded corners] at (2+0.25,3-0.33) {\small $S_n, A_n$};
			\node[draw, rounded corners] at (0.66+0.25,4-0.33) {\small row-interchangeability};

			\node[] at (1,0.2) {\Large \P{}};
		\end{scope}

		\draw[line width=1.5pt,draw=black!50,dashed] (3.25, -0.5) to (3.25, 5.125);

		\begin{scope}[xshift=4cm]
			\node[draw, rounded corners] at (1.4,3) {\small $S_{\log n} \times S_{\log n}$};
			\node[draw, rounded corners,text width=3.25cm] at (1.4,1.75) {\small $|G : H| \leq \poly(n)$, where $H$ hard};
			\node[] at (1.5,0.2) {\Large ?};
		\end{scope}

		\draw[line width=1.5pt,draw=black!50,dashed] (7.5, -0.5) to (7.5, 5.125);

		\begin{scope}[xshift=8cm]
			\node[] at (2.25,4.66) {\large ``hard''};

			\node[draw=cyan!50, rounded corners,very thick,text width=2.5cm] at (1.8,3.5) {\small row-column symmetry};
			\node[draw=cyan!50, rounded corners,very thick] at (2.2,2.45) {\small proper Johnson};
			\node[draw=cyan!50, rounded corners,very thick,text width=3.25cm] at (2,1.4) {\small $|H : G| \leq \poly(n)$, where $H$ hard};
			\node[] at (2,0.2) {\Large \NP{} $\to$ \GI{} $\in$ \coNP{}};
		\end{scope}
	\end{tikzpicture}}
	\caption{Complexity of computing symmetry breaking predicates for the stated families of groups in SAT. All groups can be handled in quasi-polynomial time using a circuit. The symbol~$G$ refers to the permutation group of consideration. The parameter $n$ refers to the domain size of the permutation group, or, the number of variables of the formula. For ``easy'' families of groups, a CNF predicate can be computed in polynomial time. For ``hard'' families of groups, the existence of polynomial time symmetry breaking, even allowing the use of additional variables, implies that \GI{} is in \coNP{}. Blue outlines indicate novel results proven in this paper.}
	\label{fig:overview}
\end{figure}
In summary, Figure~\ref{fig:overview} provides a concise description of our progress towards a complexity classification for the problem of generating SBPs for permutation groups.

\section{Preliminaries} \label{sec:preliminaries}

\subsection{Boolean Circuits and Satisfiability} \label{sec:sat}
\xparagraph{Boolean Circuits.} A Boolean circuit $\psi$ is a circuit consisting of input gates, one output gate, and $\{\text{AND}, \text{OR}, \text{NOT}\}$-gates connecting them in the usual way. 
We refer to the input gates as the \emph{variables} $\Var(\psi)$.
The \emph{size} of a circuit refers to the number of gates.   

An \emph{assignment} of $\psi$ is a function $\theta \colon V \to \{0, 1\}$ where $V \subseteq \Var(\psi)$.
The assignment is \emph{complete} whenever $V = \Var(\psi)$ and \emph{partial} otherwise.
A circuit is evaluated using an assignment $\theta \colon V \to \{0, 1\}$, by replacing each input gate $v \in V$ with $\theta(v)$, with the usual meaning. 
The resulting circuit is $\psi[\theta]$.
Whenever $\theta$ is complete, the value of the output gate can be determined in linear time, and hence either $\psi[\theta] = 0$ or $\psi[\theta] = 1$ holds.

If $\psi[\theta] = 1$ we call $\theta$ a \emph{satisfying assignment}, whereas if $\psi[\theta] = 0$ we call $\theta$ a \emph{conflicting assignment}. 
A circuit $\psi$ is \emph{satisfiable} if and only if there exists a satisfying assignment to $\psi$.

\xparagraph{Conjunctive Normal Form.} In practice, a SAT instance $\psi$ is typically given in \emph{conjunctive normal form} (CNF), which we denote with
$\psi = \{\{l_{1,1} \vee \cdots{} \vee l_{1,k_1}\} \wedge \cdots{} \wedge \{l_{m,1} \vee \cdots{} \vee l_{m,k_m}\}\}.$
Each element $C \in \psi$ is called a \emph{clause}, whereas a clause itself consists of a set of \emph{literals}. 
A literal is either a variable $v$ or its negation $\n{v}$.

A symmetry, or \emph{automorphism}, of $\psi$ is a permutation of the variables $\varphi \colon \Var(\psi) \to \Var(\psi)$ which maps $\psi$ back to itself, i.e., $\psi^\varphi \equiv \psi$, where $\varphi$ is applied element-wise to the variables in each clause. 
The permutation group of all symmetries of $\psi$ is $\Aut(\psi) \leq \Sym(\Var(\psi))$.

Another common way to define symmetries is to define them on the \emph{literals} of the formula, allowing the use of so-called \emph{negation symmetries} (see \cite{DBLP:series/faia/Sakallah21}).
In any case, symmetries can be efficiently computed in practice using state-of-the-art symmetry detection tools \cite{DBLP:journals/jsc/McKayP14, DBLP:conf/tapas/JunttilaK11, DBLP:conf/dac/DargaLSM04, DBLP:conf/esa/AndersS21}.

\subsection{Permutation Groups} \label{sec:permutationgroups}
We briefly introduce some notation and results for permutation groups. 
For further background material on permutation groups, we refer to \cite{DIX96}. Throughout, we use the notation $[n] \coloneqq \{1, \ldots, n\}$ for $n \in \mathbb{Z}_{>0}$ and set $[0] \coloneqq \emptyset$.

Let $\Omega$ be a nonempty finite set. Let $\Sym(\Omega)$ denote the \emph{symmetric group} on $\Omega$, i.e., the group of permutations of~$\Omega$.
A \emph{permutation group} is a subgroup~$G$ of $\Sym(\Omega)$, denoted by $G \leq \Sym(\Omega)$. 
We also say that~$G$ \emph{acts on} $\Omega$. 
A permutation group is always specified by the abstract isomorphism type of $G$ (for instance, $G$ could be cyclic of order 10), together with the action of~$G$ on~$\Omega$. For $g \in G$ and $\omega \in \Omega$, we write $\omega^g$ for the image of $\omega$ under $g$ and $\Orbi{G}{\omega} = \{\omega^g \colon g \in G\}$ for the \emph{orbit} of $\omega$ under $G$. 
The \emph{support} of $G$ consists of those elements in~$\Omega$ that are moved (i.e., not fixed) by some element of $G$. For a partition $P = (\Omega_1, \dots, \Omega_r)$ of $\Omega$ (i.e., $\Omega = \Omega_1 \dcup \cdots \dcup \Omega_r$), the \emph{partition stabilizer} of $P$ in $G$ consists of all elements $g \in G$ that setwise stabilize $\Omega_1, \dots, \Omega_r$, i.e.~for all $i \in [r]$, $\{\omega^g \,\!:\,\! \omega \in \Omega_i\}= \Omega_i$.  The \emph{index} of a subgroup $H$ of $G$ is $|G:H| \coloneqq |G|/|H|$. 

Two permutation groups $G \leq \Sym(\Omega)$ and $H \leq \Sym(\Delta)$ are \emph{permutation isomorphic} if there exists a bijection $\lambda \colon \Omega \to \Delta$ and a group isomorphism $\alpha \colon G \to H$ such that $\lambda(\omega^g) = \lambda(\omega)^{\alpha(g)}$ for all $\omega \in \Omega$ and $g \in G$. Note that this notion is stronger than $G$ and $H$ being isomorphic (as abstract groups) as the same abstract group can give rise to different group actions. For instance, $\Sym(k)$ admits so-called Johnson actions on different domains:

\xparagraph{Johnson Groups.} Let $k$ be a positive integer and $t \in [k-1]$. 
A permutation $\pi \in \Sym(k)$ induces a permutation $\hat{\pi}$ on the domain ${[k] \choose t}$ of $t$-subsets of $[k]$, mapping a $t$-subset $A$ to $A^{\hat{\pi}} = \{a^\pi \colon a \in A \}$. 
This way, $\Sym(k)$ becomes a permutation group $S_k^{(t)}$ on a domain of size~${k \choose t}$. The groups $S_k^{(t)}$ are called \emph{Johnson groups} and 
the action is called a \emph{Johnson action}. 
We call a Johnson group \emph{proper} if $t \not\in \{1,k-1\}$ holds. 

Usually, the analogous action of the so-called alternating groups is also called a Johnson action. 
Due to our results in Section~\ref{sec:babai}, it suffices to only consider the symmetric groups. 

\xparagraph{Wreath products.} 
Let $G \leq \Sym(\Omega)$ and $H \leq \Sym(\Delta)$ be permutation groups. The \emph{wreath product} $G \wr H$ consists of the set $G^{\Delta} \times H$, endowed with the multiplication rule $$\bigl((g^{}_\delta)_{\delta \in \Delta}, h\bigr)\bigl((g'_{\delta})_{\delta \in \Delta}, h'\bigr) = \bigl((g^{}_\delta g'_{\delta^{h^{-1}}})_{\delta \in \Delta}, hh'\bigr).$$ We call $G$ the \emph{base group} and $H$ the \emph{top group}.
The group $G \wr H$ acts on $\Omega \times \Delta$ by $$(\omega, \delta)^{((g_\delta)_{\delta \in \Delta},h)} =  (\omega^{g_{\delta^h}}, \delta^h).$$ 
This action is called the \emph{imprimitive action} of the wreath product. 

\subsection{Graph Isomorphism and String Canonization} \label{sec:gi}
\xparagraph{Graphs.} A finite, undirected graph $\Gamma = (V, E)$ consists of a set of vertices $V \subseteq \mathbb{N}$ and an edge relation $E \subseteq {V \choose 2}$.  
Unless stated otherwise, the set of vertices $V$ is $\{1, \dots{}, n\}$ and $m \coloneqq |E|$ denotes the number of edges.
We may refer to the set of vertices of $\Gamma$ with $V(\Gamma)$, and to the set of edges with $E(\Gamma)$. 
The \emph{adjacency matrix} of $\Gamma$ is the $n \times n$-matrix $A = (a_{ij})$ with $a_{ij} = 1$ if $\{i,j\} \in E(\Gamma)$, and $a_{ij} = 0$ otherwise. 
Unless stated otherwise, we assume our graphs are given as adjacency matrices.

A graph~$\Gamma$ is \emph{bipartite} if $V(\Gamma) = A \dcup B$ can be partitioned into two independent sets $A = \{a_1, \dots, a_{k}\}$ and $B = \{b_1, \dots, b_\ell\}$. 
In this case, we may obtain an \emph{bipartite adjacency matrix} $M = (m_{ij})$ by setting $m_{ij} = 1$ if $a_{i}$ and $b_j$ are adjacent, and $m_{ij} = 0$ otherwise.

\xparagraph{Lexicographic ordering.} For $\{0,1\}$-strings $\theta, \theta'$ of the same length, we write $\theta \preceq_{\text{lex}} \theta'$ if~$\theta$ is smaller or equal to $\theta'$ with respect to the lexicographic ordering.
Likewise, we define a lexicographic ordering of matrices with entries in $\{0,1\}$ of a fixed size by interpreting them as strings, reading them row by row. 

\xparagraph{Relational Structures.} 
As a generalization of graphs, we define a \emph{$t$-ary relational structure} $R = (U,A)$, where $U$ is a universe and $A$ is a $t$-ary relation on $U$.
A $t$-ary relational structure is \emph{symmetric} if for every $t$-tuple $(u_1,\dots,u_t)\in A$ and for every $\sigma \in \Sym(t)$, it holds that $(u_{\sigma(1)},\dots,u_{\sigma(t)}) \in A$.

\xparagraph{Graph Isomorphism.} Two graphs $\Gamma_1 = (V_1, E_1), \Gamma_2 = (V_2, E_2)$ are said to be \emph{isomorphic}, whenever there exists a bijection $\varphi \colon V_1 \to V_2$ such that 
$\varphi(\Gamma_1) = (V_1^{\varphi}, E_1^{\varphi}) = (V_2, E_2) = \Gamma_2$
holds.
Here, $V_1^{\varphi}$ and $E_1^{\varphi}$ means applying $\varphi$ element-wise to each element in $V_1$, and each element of each tuple in $E_1$, respectively.
We call $\varphi$ an \emph{isomorphism} between $\Gamma_1$ and $\Gamma_2$.
We may write $\Gamma_1 \cong \Gamma_2$ to denote isomorphism.
A corresponding computational problem follows:
\begin{problem}[\GI{}] Given two graphs $\Gamma_1, \Gamma_2$, does $\Gamma_1 \cong \Gamma_2$ hold?
\end{problem}
Regarding certification, it is easy to see that \GI{} is in \NP{}.
On the other hand, graph isomorphism is known to be in \coAM{}, i.e., there are efficient randomized proofs for non-isomorphism \cite{BabaiM88}.
As mentioned in the introduction, whether graph isomorphism is in \coNP{} is a long-standing open problem \cite{KoeblerST93}. 

Analogously, we may define isomorphism for $t$-ary relational structures $R_1 = (U_1, A_1)$ and $R_2 = (U_2, A_2)$: $R_1$ and $R_2$ are \emph{isomorphic} if there exists a bijection $\pi \colon U_1 \to U_2$ such that for every $(u_1,\dots,u_t) \in A_1$, it holds that $(u_1^{\pi}, \dots, u_t^{\pi}) \in A_2$ and vice-versa. 

\xparagraph{String Canonization.} We next define the \emph{string canonization} problem \cite{BabaiL83, Luks82}. The string canonization problem asks, given a permutation group $G \leq \Sym(\Omega)$ and a string $\sigma \colon \Omega \to \Sigma$ on a finite alphabet $\Sigma$, for a canonical representative of $\sigma^{G}$.
In particular, it computes a function $F \colon \mathcal{G} \times \Sigma^\Omega \to \Sigma^\Omega$ where $\mathcal{G}$ denotes the set of all permutation groups $G \leq \Sym(\Omega)$, and for all $\sigma_1, \sigma_2 \in \Sigma^\Omega$ it holds that 
(1) $F(G, \sigma_1) \cong_G \sigma_1$ and
(2) if $\sigma_1 \cong_G \sigma_2$ then $F(G, \sigma_1) = F(G, \sigma_2)$.  
Here, $\cong_G$ means that $\sigma_1$ can be permuted to $\sigma_2$ using an element of $G$.
A corresponding computational problem follows:
\begin{problem}[\sCANONf] Given a permutation group $G \leq \Sym(\Omega)$, a finite alphabet $\Sigma$ and a string $\sigma \in \Sigma^\Omega$, compute the canonical representative $F(G, \sigma)$.
\end{problem}
The graph isomorphism problem polynomial time reduces to \sCANON, but the converse is unknown. 
However, there is an $F$ such that there is a quasi-polynomial time algorithm which solves the string canonization problem \cite{DBLP:conf/stoc/Babai19}.
It turns out that the string canonization problem is intimately related to symmetry breaking, which we discuss thoroughly in Section~\ref{sec:dcanon}.

A crucial special case of string canonization is \emph{graph canonization}.
As the name suggests, it computes canonical forms for graphs.
Let $f$ be a graph canonization function, i.e., for graphs $\Gamma,\Delta$, it holds that
	(1) $\Gamma \cong \Delta$ iff $f(\Gamma) = f(\Delta)$, and, 
	(2) $f(\Gamma) \cong \Gamma$.
Here, the symbol $\cong$ denotes the graph isomorphism relation. 
The corresponding computational problem follows:
\begin{problem}[\sGCANONf] Given a graph $\Gamma$, compute the canonical representative $f(\Gamma)$ within the isomorphism class of $\Gamma$.
\end{problem}
Indeed, this problem is a special case of string canonization: $G$ can be chosen as a Johnson group of appropriate order and the strings encode the given graphs (see \cite{DBLP:conf/stoc/Babai19}).

\subsection{Notions of Symmetry Breaking}
Next, we define our notions of symmetry breaking. 
Let $\psi$ be a CNF formula. 
Typically, symmetry breaking is defined specifically for the automorphism group $\Aut(\psi)$ of $\psi$.
However, it turns out that often, our symmetry breaking predicates only depend on the structure of $\Aut(\psi)$ and its action on the set of variables $\Var(\psi)$.
In particular, they do not depend on the specific shape of the formula $\psi$.
Hence, we define symmetry breaking only using an arbitrary permutation group $G \leq \Sym(\Omega)$ and without referring to a precise formula $\psi$. 

\xparagraph{Symmetry Breaking Constraints.}
We begin with a discussion of complete symmetry breaking.
Indeed, we find that in the literature two different notions
are in use.

The first of these notions is what we will refer to simply as \emph{complete symmetry breaking}. 
The idea is that a complete symmetry breaking constraint must ensure that in each orbit of \emph{complete} assignments, all but one canonical representative is conflicting \cite{DBLP:conf/kr/CrawfordGLR96}.

Formally, we let $\theta_{\text{full}}(\Omega) \coloneqq \{\theta \;|\; \theta \colon \Omega \to \{0,1\}\}$ denote the set of all complete assignments to $\Omega$.
We let $G \leq \Sym(\Omega)$ act on $\theta_{\text{full}}(\Omega)$ in the natural way.
A Boolean circuit $\psi$ with $\Var(\psi) \subseteq \Omega$ is called a \emph{complete} symmetry breaking circuit for~$G$, whenever for each orbit $O \subseteq \theta_{\text{full}}(\Omega)$ under $G$, there is
\begin{itemize}
	\item a $\tau \in O$ such that $\psi[\tau]$ is satisfying, 
	\item for all $\tau' \in O$ with $\tau \neq \tau'$ the formula $\psi[\tau']$ is conflicting.
\end{itemize}

If $\psi$ is restricted to be a CNF formula, we refer to $\psi$ as a symmetry breaking \emph{predicate}. 
This notion is typically used in the context of general-purpose symmetry breaking, such as for example in \cite{DBLP:conf/kr/CrawfordGLR96, DBLP:conf/dac/AloulMS03, DBLP:conf/sat/Devriendt0BD16, DBLP:journals/mics/Heule19}.
We remark that in \cite{DBLP:journals/mics/Heule19}, this notion is referred to as an \emph{isolator}.

The second notion in use in the literature is \emph{isomorph-free generation}.
It is usually considered in the realm of \emph{dynamic} symmetry breaking.
However, a notion for predicates can be defined: a predicate is supposed to ensure that in each orbit of \emph{partial} assignments, all but one canonical representative is conflicting.
Intuitively, isomorph-free generation ensures that no isomorphic branches are \emph{ever} considered in the search. 
Isomorph-free generation immediately also ensures complete symmetry breaking.
It is typically used in the context of generation tasks, such as in \cite{DBLP:journals/jal/McKay98, DBLP:conf/cp/KirchwegerS21}, but it has also been considered for general-purpose symmetry breaking \cite{DBLP:journals/jsc/JunttilaKKK20}.

The focus of this paper is on complete symmetry breaking and not on isomorph-free generation.

\xparagraph{Symmetry Breaking as a Computational Problem.} 
We define a corresponding computational problem for symmetry breaking.

\begin{problem}[Symmetry Breaking] Given a permutation group $G \leq \Sym(\Omega)$, compute a complete symmetry breaking circuit for $G$.
\end{problem}

There are two variations of this problem that we discuss throughout the paper: 
the first of which concerns the group $G$. 
Usually, $G$ is the automorphism group of a given CNF formula $\psi$, i.e., $G = \Aut(\psi)$.
In this case, the problem might become easier, since automorphism groups and a given formula may admit further structural arguments.
However, considering symmetry breaking for arbitrary permutation groups $G$ opens up the possibility of using symmetries \emph{beyond} syntactic ones, even though it might be unclear how they could be obtained.
Furthermore, results are independent of the specific structure of SAT instances.

The second variation concerns the output: we may expect a CNF predicate, or a Boolean circuit.
Computing a CNF predicate may be harder, since circuits are more expressive.
We believe that all variations of the problem are of potential interest.
Therefore, it seems best to attempt to use the problem definition which yields the strongest possible statement.

\section{Row-Column Symmetries} \label{sec:gridhardness}
In this section, we analyze the complexity of computing symmetry breaking predicates for row-column symmetry.
\cref{sec:gridthm} describes the hardness of obtaining SBPs for breaking row-column symmetries. 
In particular, we provide a proof of \cref{thm:core}. 
\cref{sec:dcanon} establishes the connection between symmetry breaking and decision string canonization. 
Lastly, in \cref{sec:gridthmaux}, we strengthen our results to work for circuit SBPs and SBPs with extra variables.

\subsection{Hardness of Breaking Row-Column Symmetries}\label{sec:gridthm}
We begin with a formal definition of row-column symmetry.

\xparagraph{Row-Column Symmetry.}
Let $m,n$ be two positive integers, and $\Omega \coloneqq [n] \times [m]$.
The \emph{row-column symmetry} group $G$ is defined to be the group $\Sym([n]) \times \Sym([m])$, 
where $\Sym([n])$ naturally acts on the first component of $\Omega$, and $\Sym([m])$ on the second component.
Informally, we can view $\Omega$ as a \emph{matrix} with $n$ rows and $m$ columns. 
The group $G \leq \Sym(\Omega)$ then consists of all the possible row transpositions and all possible column transpositions,
along with their arbitrary compositions.

A \emph{matrix model} is a constraint program whose decision variables can be arranged as a matrix above such that its automorphism group is the row-column symmetry group for this matrix arrangement. For such programs, it is typical to index their variable set by $\Omega = \{x_{ij} \;|\; i \in [n], j \in [m]\}$. 

The following lemma states a one-to-one correspondence between assignments to a matrix model and bipartite graphs. Let $\Gamma(U,V)$ denote a bipartite graph with a designated left-partition $U$ and a right-partition $V$, where $U$ and $V$ are non-interchangeable. 
\begin{lemma}\label{prop:asg_grph}
	There exists a one-to-one correspondence between the set of all Boolean assignments to the variables $\{x_{11}, \dots, x_{nm}\}$ of a matrix model and the set of all bipartite graphs $\Gamma([n],[m])$ with designated left and right partitions. 
\end{lemma}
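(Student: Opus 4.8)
The plan is to exhibit the bijection explicitly and check it is well-defined in both directions. A Boolean assignment to the matrix-model variables is a function $\theta\colon\{x_{ij}\mid i\in[n],\,j\in[m]\}\to\{0,1\}$, equivalently an $n\times m$ matrix $M_\theta=(m_{ij})$ with $m_{ij}=\theta(x_{ij})\in\{0,1\}$. First I would associate to $\theta$ the bipartite graph $\Gamma_\theta([n],[m])$ whose left partition is $[n]$, whose right partition is a disjoint copy of $[m]$, and in which the left vertex $i$ is adjacent to the right vertex $j$ if and only if $m_{ij}=1$, i.e.\ $M_\theta$ is exactly the bipartite adjacency matrix of $\Gamma_\theta$ in the sense defined in the Preliminaries. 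Conversely, to a bipartite graph $\Gamma([n],[m])$ with designated (non-interchangeable) left partition $[n]$ and right partition $[m]$ I would assign the matrix $M=(m_{ij})$ with $m_{ij}=1$ iff $i\sim j$ in $\Gamma$, and hence the assignment $\theta_\Gamma$ given by $\theta_\Gamma(x_{ij})=m_{ij}$.

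The key steps are then the two routine verifications that these maps are mutually inverse. Starting from $\theta$, forming $\Gamma_\theta$, and reading off its bipartite adjacency matrix returns $M_\theta$, hence recovers $\theta$; starting from $\Gamma$, forming $\theta_\Gamma$, and then building the bipartite graph from $\theta_\Gamma$ returns a graph with the same left partition $[n]$, same right partition $[m]$, and the same edge set, hence $\Gamma$ itself. Since both compositions are the identity, the correspondence $\theta\mapsto\Gamma_\theta$ is a bijection. I would stress that the designation of left versus right partition is what makes this a genuine bijection rather than a correspondence up to swapping sides: without it, the transpose operation would identify distinct assignments (when $n=m$), so the hypothesis that $U$ and $V$ are non-interchangeable is used precisely here.

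The main obstacle, such as it is, is purely bookkeeping: one must be careful that a ``bipartite graph $\Gamma([n],[m])$'' is understood as a graph together with the fixed ordered bipartition, so that the vertex labels $1,\dots,n$ on the left and $1,\dots,m$ on the right are part of the data; with that convention fixed, there is no real content beyond matching up definitions. I would also remark, for use later in Section~\ref{sec:gridthm}, that this bijection is equivariant: the action of the row-column group $G=\Sym([n])\times\Sym([m])$ on assignments corresponds exactly to the action of $\Sym([n])\times\Sym([m])$ on bipartite graphs permuting the left and right vertices independently, so orbits of assignments under $G$ correspond to isomorphism classes of bipartite graphs with the bipartition fixed setwise. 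This equivariance is the property that will actually be needed downstream, even though the lemma as stated only claims the bijection.
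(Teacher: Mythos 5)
Your proof is correct and matches the paper's one-line argument (interpret $\theta(x_{ij})$ as the indicator of an edge between $i$ and $j$), simply spelled out in both directions with the bijectivity check made explicit. The closing remark about $G$-equivariance and the role of the ordered bipartition is a helpful observation that the paper leaves implicit but does rely on in the proof of Theorem~\ref{thm:core}.
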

\begin{proof}
	 Interpret the truth-value of $x_{ij}$ as the indicator for whether there exists an edge between $i \in [n]$ and $j \in [m]$.
\end{proof} 

We proceed with the proof of Theorem~\ref{thm:core}. 

\begin{proof}[Proof of Theorem~\ref{thm:core}]
	We devise a polynomial time verifier for checking purported certificates for non-isomorphism, assuming that we can compute a row-column symmetry breaking predicate in polynomial time.	
	
	\textit{(Bipartite Graphs Suffice.)} It will be more convenient for us to work with bipartite graphs instead of general graphs, in the spirit of standard reductions in isomorphism literature \cite{ZemKNTR}. To every graph $\Gamma$, we can always associate a bipartite graph $\bip(\Gamma)$, namely the vertex-edge incidence graph as follows. 
	The graph $\bip(\Gamma)$ has a designated \emph{left} partition consisting of $V(\Gamma)$, a designated \emph{right} partition consisting of $E(\Gamma)$, and the edges of $\bip(\Gamma)$ are defined by vertex-edge incidence. 
	Moreover, $\bip(\Gamma)$ is a vertex-ordered graph: the left partition inherits the ordering from the graph $\Gamma$, and the right partition $E(\Gamma)$ is ordered according to the ordering induced by the ordering of $V(\Gamma)$.
	Observe that the mapping $\Gamma \mapsto \bip(\Gamma)$ is injective. 
	Moreover, it is easy to verify that two graphs $\Gamma$ and $\Delta$ are isomorphic if and only if the bipartite graphs $\bip(\Gamma)$ and $\bip(\Delta)$ are isomorphic via a bijection which maps the left-partition (right-partition) of $\bip(\Gamma)$ to the left-partition (right-partition) of $\bip(\Delta)$. 
	
	Therefore, it suffices to verify non-isomorphism certificates for bipartite graphs. 
	
	\textit{(Certificate.)} Given two bipartite graphs $\Gamma$ and $\Delta$, our chosen certificate of non-isomorphism is a pair of bijections $(\sigma,\pi)$, where $\sigma \colon V(\Gamma) \to V(\Gamma)$ and $\pi \colon V(\Delta) \to V(\Delta)$. 
	
	\textit{(Verifier.)} Given such a certificate $(\sigma,\pi)$, our polynomial time verifier proceeds as follows: 
	\begin{enumerate}
		\item Compute a symmetry breaking predicate $\delta_{n,m}(x_{11},\dots,x_{nm})$ in time $\mathsf{poly}(n,m)$.
		\item Check if both $\Gamma^\sigma$ and $\Delta^\pi$ satisfy $\delta_{n,m}(x_{11}, \cdots,x_{nm})$, when viewed as Boolean assignments. If both of them satisfy $\delta_{n,m}$, continue; otherwise reject.
		\item Check whether $\Gamma^\sigma \neq \Delta^\pi$, otherwise reject. 
		\item Declare $\Gamma$ and $\Delta$ to be non-isomorphic.
	\end{enumerate}
	It is easy to verify that all of the steps above are polynomial time computations. 
	
	\textit{(Correctness of Verifier.)} 
	It remains to be shown that (1) for every pair of non-isomorphic graphs, there exists a polynomial sized certificate accepted by the verifier above, and (2) for every pair of isomorphic graphs, the verifier always rejects any certificate. 
	
	For (1), let $\Gamma$ and $\Delta$ be two non-isomorphic graphs. 
	Let $\Gamma^*$ be the unique satisfying assignment of $\delta_{n,m}$ in the orbit of $\Gamma$ (similarly define $\Delta^*$) under row-column symmetries.
	Let $\sigma$ be an isomorphism from $\Gamma$ to $\Gamma^*$ (similarly define $\pi$). 
	Since $\Gamma \not\cong \Delta$, it must hold that $\Gamma^*$ and $\Delta^*$ lie in different orbits, and hence $\Gamma^* \neq \Delta^*$. 
	Since the certificate satisfies all conditions of the verifier, the verifier correctly certifies $\Gamma$ and $\Delta$ to be non-isomorphic. 
	
	For (2), suppose $\Gamma$ and $\Delta$ are isomorphic. Then they lie in the same orbit under the action of row-column symmetry on the Boolean assignments to the matrix model.
	Given any certificate $(\sigma, \pi)$, the requirement of $\Gamma^\sigma$ and $\Delta^\pi$ having to satisfy $\delta_{n,m}$ implies that $\Gamma^\sigma = \Delta^\pi$. 
	But then such a certificate is rejected by the verifier in the third step. 
	Hence, the verifier correctly refuses to certify that $\Gamma$ and $\Delta$ are non-isomorphic. 
\end{proof} 

It is not clear whether the converse of Theorem~\ref{thm:core} holds. 
In fact, even a \P{}-time algorithm for graph isomorphism may not be sufficient to yield symmetry breaking algorithms for row-column symmetries. 
In what follows, we address this situation with a closer examination of the complexity of symmetry breaking. 

\subsection{A Decision Version of String Canonization} \label{sec:dcanon} 
We now introduce a \emph{decision variant} of the string canonization problem, which only decides whether a given string \emph{is} the canonical string:
\begin{problem}[\dCANONf] Given a group $G \leq \Sym(\Omega)$, a finite alphabet $\Sigma$ and a string $\sigma \in \Sigma^\Omega$, decide whether $\sigma = F(G, \sigma)$ holds, i.e., whether $\sigma$ is the canonical representative within its isomorphism class $\sigma^G$.
\end{problem}

Let us consider a CNF formula $\psi$.
We consider the case of the string canonization problem where $\Sigma = \{0, 1\}$ and the group $G \leq \Sym(\Var(\psi))$ consists of symmetries of $\psi$. 
Note that any two given assignments $\sigma_1$ and $\sigma_2$ of $\psi$ can be interpreted as strings, and $\sigma_1 \cong_{G} \sigma_2$ holds if and only if they are in the same orbit of $G$.

We observe that an algorithm for \dCANON accepts precisely one assignment per orbit of~$G$.
But this just means that if we translate such an algorithm into a Boolean circuit, the resulting circuit \emph{is} a symmetry breaking circuit. 

Clearly, \dCANONf polynomial time reduces to \sCANONf. 
Since \sCANON can be solved using a quasi-polynomial time algorithm \cite{DBLP:conf/stoc/Babai19}, Theorem~\ref{thm:quasiupper} follows immediately.

Analogously, we may define a decision version of the graph canonization problem, denoted as \dGCANONf.
Recall that graph canonization is a special case of string canonization.
In the following, we prove that the decision canonization problem is tightly related to graph isomorphism in terms of its non-deterministic complexity. 
\begin{lemma}\label{lem:dcanonnp} 
	Let $f$ be a canonical form such that \dGCANONf is in \NP{}. Then, $\GI{} \in \coNP{}$.
\end{lemma}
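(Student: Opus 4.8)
The goal is to show that an \NP{} algorithm for \dGCANONf{} yields a \coNP{} algorithm for \GI{}, i.e., a non-deterministic polynomial-time verifier for graph non-isomorphism. The plan is to mimic the structure of the proof of Theorem~\ref{thm:core}, replacing the symmetry breaking predicate by the assumed \NP{} verifier for \dGCANONf{}. Given two graphs $\Gamma_1, \Gamma_2$ on vertex set $[n]$ that we wish to certify as non-isomorphic, the certificate will consist of two permutations $\sigma_1, \sigma_2 \in \Sym(n)$ (the purported canonizing permutations), \emph{together with} the non-deterministic witnesses needed to run the \NP{} algorithm for \dGCANONf{} on the inputs $f(\Gamma_1^{\sigma_1})$ and $f(\Gamma_2^{\sigma_2})$. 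The verifier then: (i) computes $\Gamma_1^{\sigma_1}$ and $\Gamma_2^{\sigma_2}$ in polynomial time; (ii) uses the supplied witnesses to check, via the \NP{} algorithm, that $\Gamma_1^{\sigma_1} = f(\Gamma_1^{\sigma_1})$ and $\Gamma_2^{\sigma_2} = f(\Gamma_2^{\sigma_2})$, i.e., that both are already in canonical form; (iii) checks that $\Gamma_1^{\sigma_1} \neq \Gamma_2^{\sigma_2}$ as adjacency matrices; and (iv) accepts (declaring non-isomorphism) iff all checks pass.

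\textbf{Correctness.} There are two directions. For soundness, suppose $\Gamma_1 \cong \Gamma_2$. Then for any $\sigma_1, \sigma_2$ passing check~(ii), both $\Gamma_1^{\sigma_1}$ and $\Gamma_2^{\sigma_2}$ are canonical forms of graphs in the same isomorphism class; by property~(1) of a canonical form ($\Gamma \cong \Delta$ iff $f(\Gamma) = f(\Delta)$), we get $\Gamma_1^{\sigma_1} = f(\Gamma_1) = f(\Gamma_2) = \Gamma_2^{\sigma_2}$, so check~(iii) fails and no certificate is accepted. For completeness, suppose $\Gamma_1 \not\cong \Gamma_2$. Choose $\sigma_i$ to be an isomorphism from $\Gamma_i$ to $f(\Gamma_i)$, which exists by property~(2) ($f(\Gamma) \cong \Gamma$); then $\Gamma_i^{\sigma_i} = f(\Gamma_i)$, which is canonical, so the \NP{} algorithm for \dGCANONf{} accepts on these inputs and (by definition of \NP{}) there exist polynomial-size witnesses making check~(ii) pass. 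By property~(1), $f(\Gamma_1) \neq f(\Gamma_2)$, so check~(iii) passes as well, and the verifier accepts. Hence the described non-deterministic procedure decides graph non-isomorphism in polynomial time, giving $\GI{} \in \coNP{}$.

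\textbf{Main obstacle.} The only subtle point is that the \NP{} algorithm for \dGCANONf{} is itself non-deterministic: it accepts iff \emph{some} witness exists. A purported certificate that merely claims ``$\Gamma_i^{\sigma_i}$ is canonical'' cannot be checked deterministically, so we must bundle the \dCANONf{} witnesses into our non-isomorphism certificate, relying on the fact that a composition of non-deterministic guesses is still a single non-deterministic computation of polynomial length. One must also confirm that the \dGCANONf{} instance really is applied to a genuine graph-canonization setting — but this is immediate since \dGCANONf{} is by definition the decision version of graph canonization with respect to the fixed canonical form $f$, and the inputs $\Gamma_i^{\sigma_i}$ are honest adjacency matrices computable in polynomial time from the certificate. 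No appeal to the full power of string canonization is needed; we only use the black-box \NP{} algorithm and the two defining properties of $f$.
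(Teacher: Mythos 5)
Your proposal is correct and takes essentially the same route as the paper: the certificate consists of two permutations together with the \NP{} witnesses for the \dGCANONf{} verifier, the verifier checks that both permuted graphs are canonical and then compares them directly, and correctness follows from the two defining properties of the canonical form. The ``main obstacle'' you flag (bundling the non-deterministic witnesses into the non-isomorphism certificate) is handled in the paper in exactly the same way.
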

\begin{proof}
	Assuming \dGCANON is in \NP{} gives us access to a class of polynomial-sized certificates and a polynomial time verifier for these certificates, such that the following hold. 
	If a given graph $\Gamma$ is the canonical representative of its isomorphism class, then there must be a certificate $\sigma$ such that the verifier accepts $(\Gamma, \sigma)$. 
	If $\Gamma$ is not the canonical representative, then for all certificates $\sigma$ the verifier rejects $(\Gamma, \sigma)$.

	Based on this, we provide a non-deterministic polynomial time algorithm for graph non-isomorphism of two graphs $\Gamma$ and $\Delta$.
	
	\textit{(Certificate.)} The certificate consists of two permutations $\varphi_1 \in \Sym(V(\Gamma)), \varphi_2 \in \Sym(V(\Delta))$, a certificate $\sigma_1$ for decision canonization of $\Gamma$, as well as $\sigma_2$ for decision canonization of $\Delta$.   
	
	\textit{(Verifier.)}
	Given two graphs $\Gamma, \Delta$ and certificate $(\varphi_1, \varphi_2, \sigma_1, \sigma_2)$, the verifier proceeds as follows. (Step~1) Run the decision canonization verifier for $({\Gamma}^{\varphi_1}, \sigma_1)$ and $({\Delta}^{\varphi_2}, \sigma_2)$. If both are accepted, proceed, otherwise reject. (Step~2) Accept if ${\Gamma}^{\varphi_1} \neq {\Delta}^{\varphi_2}$, otherwise reject.

	\textit{(Correctness of Verifier.)} Note that whenever we reach Step~2 of the verifier, the procedure guarantees that ${\Gamma}^{\varphi_1}$ is a canonical form of $\Gamma$ and ${\Delta}^{\varphi_2}$ of $\Delta$.
	Hence, $\Gamma \cong \Delta$ holds if and only if ${\Gamma}^{\varphi_1} = {\Delta}^{\varphi_2}$ holds. 
	It immediately follows that the algorithm accepts if and only if $\Gamma$ and~$\Delta$ are non-isomorphic.  
\end{proof}

\subsection{Hardness of Symmetry Breaking with Additional Variables} 
\label{sec:gridthmaux} 
Consider the situation where one is allowed to use additional variables from a set $\Omega'$ to write down symmetry breaking constraints. In principle, this expands our domain $\Omega$ of variables used to $\Omega \dcup \Omega'$. Since the introduction of new variables $\Omega'$ changes the set of assignments, we need to adjust our definition of complete symmetry breaking.

\xparagraph{Symmetry Breaking with Additional Variables.}
A Boolean circuit $\psi$ is called a complete symmetry breaking circuit \emph{with additional variables} for $G \leq \Sym(\Omega)$, whenever for each orbit $\tau \subseteq \sigma_{\text{full}}(\Omega)$ under ${G}$, there is 
\begin{itemize}
	\item a $\tau' \in \tau$ such that $\psi[\tau']$ is satisfiable, 
	\item for all $\tau'' \in \tau$ with $\tau' \neq \tau''$ the circuit $\psi[\tau'']$ is unsatisfiable.
\end{itemize}
A point of contention in the above definition might be whether $\psi[\tau']$ should actually have \emph{exactly one} satisfying assignment.
This would ensure that there is precisely one satisfying assignment per orbit, while our definition only suffices to ensure a unique satisfying assignment \emph{when restricted} to the variables of $\psi$. In this paper, we stick to the above definition.

Using additional variables is typically not considered in the literature, most likely because this might substantially alter the difficulty of the underlying instance. 
Introducing additional variables is however intriguing: 
it gives the symmetry breaking predicates access to non-determinism, and hence might enable substantially more powerful constraints.

\xparagraph{Hardness with Additional Variables.}
We show hardness results for symmetry breaking \emph{even if} we are allowed to introduce new variables.
\begin{theorem}\label{lem:gridhard}
	Suppose there exists a polynomial time algorithm for generating complete symmetry breaking circuits \emph{with additional variables} for row-column symmetries. Then, it holds that \dGCANON$\in \NP{}$ and hence $\GI{} \in \coNP{}$.
\end{theorem}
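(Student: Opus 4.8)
Plan for the proof of Theorem~\ref{lem:gridhard}.

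The plan is to adapt the verifier from the proof of \cref{thm:core} and combine it with the decision-canonization framework of \cref{lem:dcanonnp}. First I would fix the canonical form $f$ for bipartite graphs implicitly defined by the row-column symmetry breaking circuit: given dimensions $n,m$, compute in polynomial time the circuit $\delta_{n,m}$ with additional variables for the group $\Sym([n]) \times \Sym([m])$ on $\Omega = [n]\times[m]$, and declare an assignment $\theta \in \theta_{\text{full}}(\Omega)$ to be canonical iff $\delta_{n,m}[\theta]$ is satisfiable (over the additional variables $\Omega'$). By \cref{prop:asg_grph} this $\theta$ corresponds to a bipartite graph $\Gamma([n],[m])$ with designated sides, and by the defining property of a complete symmetry breaking circuit with additional variables, exactly one assignment per orbit is canonical; since orbits under $\Sym([n])\times\Sym([m])$ are exactly isomorphism classes of such bipartite graphs (via bijections respecting the two sides), this is a genuine canonical form.

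Next I would show \dGCANON $\in \NP$ for this $f$. The certificate that a bipartite graph $\Gamma([n],[m])$ is canonical is simply a satisfying assignment $\rho$ of $\delta_{n,m}[\Gamma]$ to the additional variables $\Omega'$ — this is polynomial-sized since $\delta_{n,m}$ has polynomial size, and it is verifiable in polynomial time by evaluating the circuit. This is precisely the point where additional variables do no harm: they are just extra non-determinism that the \NP{}-verifier absorbs into the certificate. The remaining subtlety is passing from bipartite graphs with designated sides back to arbitrary graphs; here I would reuse the $\bip(\Gamma)$ construction from the proof of \cref{thm:core}, noting that $\Gamma \mapsto \bip(\Gamma)$ is injective and that $\Gamma \cong \Delta$ iff $\bip(\Gamma)$ and $\bip(\Delta)$ are isomorphic via a side-preserving bijection, so the composition yields a bona fide graph canonical form whose \dGCANON{} problem lies in \NP{}.

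Finally, \cref{lem:dcanonnp} gives $\GI \in \coNP$ immediately, once I have phrased the above as ``there is a canonical form $f$ with \dGCANONf $\in \NP$.'' To be self-contained I might instead directly exhibit the non-isomorphism verifier: on input $(\Gamma,\Delta)$, guess bijections $\sigma \colon V(\Gamma) \to V(\Gamma)$, $\pi \colon V(\Delta)\to V(\Delta)$ (acting on $\bip(\Gamma)$, $\bip(\Delta)$ after padding both sides to common dimensions $n,m$) together with satisfying assignments $\rho_\Gamma, \rho_\Delta$ to the additional variables of $\delta_{n,m}$; accept iff $\delta_{n,m}[\bip(\Gamma)^\sigma,\rho_\Gamma]$ and $\delta_{n,m}[\bip(\Delta)^\pi,\rho_\Delta]$ both evaluate to $1$ and $\bip(\Gamma)^\sigma \neq \bip(\Delta)^\pi$. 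Correctness follows exactly as in \cref{thm:core}: a satisfying $\rho$ certifies that the relabelled graph is the canonical representative of its orbit, non-isomorphic graphs have distinct canonical representatives (completeness of the certificate), and isomorphic graphs are forced into the same canonical representative so no certificate is accepted (soundness). The main obstacle — really the only non-routine point — is checking that the ``satisfiable over $\Omega'$'' definition of canonicity still picks out exactly one assignment per orbit and interacts correctly with the non-deterministic verifier; everything else is bookkeeping around padding, the bipartite reduction, and polynomial-time evaluation of circuits.
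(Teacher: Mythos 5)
Your overall strategy matches the paper's: reduce via \cref{lem:dcanonnp} to showing \dGCANON~$\in \NP$, use the bipartite encoding from the proof of \cref{thm:core}, and observe that the additional variables are harmless because an $\NP$-verifier can guess their values as part of the certificate. That last observation is correct and is indeed one of the points the proof must make.

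However, there is a genuine gap in how you pass from bipartite graphs back to graphs, and you wave it off as ``bookkeeping.'' In both versions of your verifier the $x$-variables are \emph{fixed} to $\bip(\Gamma)$ (or $\bip(\Gamma)^\sigma$ for a vertex permutation $\sigma$): the certificate consists only of an assignment $\rho$ to the additional variables $\Omega'$, and you then evaluate $\delta_{n,m}[\bip(\Gamma),\rho]$. This does not work, because $\bip(\Gamma)$ carries a specific \emph{ordering of the columns} (edges), and the unique matrix accepted by $\delta_{n,m}$ in the orbit of $\bip(\Gamma)$ under $\Sym([n]) \times \Sym([m])$ will generally have a different column ordering. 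So the ``canonical'' graph $\Gamma$ in its isomorphism class can fail your test: no assignment to $\Omega'$ alone can make $\delta_{n,m}[\bip(\Gamma),\cdot]$ satisfiable, because the $x$-part is pinned to the wrong point of the orbit. Permuting only the vertices ($\sigma \in \Sym(V(\Gamma))$) does not fix this either, since that still leaves the column order fixed by your arbitrary choice in constructing $\bip(\Gamma)$.

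The paper resolves exactly this point with the set $\bip'(\Gamma)$ of all column reorderings of $\bip(\Gamma)$, and by making the certificate for \dGCANON{} an assignment to \emph{all} variables of $\delta_{n,m}$, including the $x_{ij}$. The verifier then does not compare against a fixed $\bip(\Gamma)$; instead it checks (i) every column has exactly two ones, (ii) the set of columns equals $E(\Gamma)$ as a set (i.e., the certificate's $x$-part lies in $\bip'(\Gamma)$), and (iii) the certificate satisfies $\delta_{n,m}$. In other words, the column permutation must be part of the guessed certificate, not hard-coded. To repair your proof you would need to enlarge your certificate to include the $x$-assignment (equivalently, a column permutation of $\bip(\Gamma)$), and replace the ``evaluate on $\bip(\Gamma)$'' step with a check that the guessed $x$-assignment encodes $\Gamma$. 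Once that is done, your argument does coincide with the paper's.
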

\begin{proof}
	It suffices to show that there exists a canonical form $f$ such that \dGCANONf$\in \NP{}$ (see Lemma~\ref{lem:dcanonnp}).

	\textit{(Certificate.)} Given a graph $\Gamma = (V, E)$, we consider the row-column symmetry group with $n = |V|$ rows and $m = |E|$ columns.
	More precisely, we assume $V = [n]$. 
	We let $\delta_{n,m}(x_{11},\dots,x_{nm}, y_1,\dots,y_p)$ denote the symmetry breaking circuit as computed by the polynomial time algorithm for row-column symmetry. Obviously, $p \in \textrm{poly}(m,n)$. 
	Our chosen certificate for decision canonization is $\sigma$, where $\sigma$ is an assignment to the variables $V = \Var(\delta_{n,m}(x_{11},\dots,x_{nm}, y_1,\dots,y_p))$.

	\textit{(Bipartite Reordering.)} We observe a technicality of our reduction: 
	\dGCANON expects as input a graph, whereas our row-column symmetry breaking circuits essentially solve the decision canonization for bipartite graphs.
	We proceed using the same encoding for graphs into bipartite graphs as discussed previously (see proof of Theorem~\ref{thm:core}).
	The order of a graph $\Gamma$ is fully determined by the order of its vertices. 
	However, observe that when only restricting the order of the left partition, the corresponding bipartite graph $\bip(\Gamma)$ may still differ in the order of the edges, i.e., in the order of the \emph{right} partition.  
	Indeed, the symmetry breaking circuit may choose to accept any of these orderings of the right partition.
	We define the set $\bip'(\Gamma)$ of bipartite graphs, where the left partition is ordered according to $V(\Gamma)$, and all potential reorderings of the right partition are contained.
	Note that $\bigcup_{\Delta \in \Gamma^{\Sym(V(\Gamma))}} \bip'(\Delta)$ covers all reorderings of the corresponding bipartite graphs.
	
	\textit{(Verifier.)} Given a certificate $\sigma$, our polynomial time verifier proceeds as follows:
	\begin{enumerate}
		\item For each column $c$ in the matrix of $x_{ij}$ variables, verify that the assignment has precisely two true variables in the column $c$. Formally, there exist $i, j \in [n]$ with $i \neq j$ such that $\sigma(x_{ic}) = 1$ and $\sigma(x_{jc}) = 1$, and for all $k \in [n]$ with $j \neq k \neq i$ it holds that $\sigma(x_{ic}) = 0$. 
		\item Check that $\Gamma$ corresponds to the bipartite graph as given by the assignment $\sigma$: for each edge $\{v_1, v_2\} \in E$, we verify that there exists a column $c$ such that $x_{v_1c} = 1$ and $x_{v_2c} = 1$. 
		\item Check if $\sigma$ satisfies $\delta_{n,m}(x_{11}, \cdots,x_{nm}, y_1,\dots,y_p)$. If it satisfies $\delta_{n,m}$, accept, otherwise reject. 
	\end{enumerate}
	By assumption, $\delta_{n,m}(x_{11},\dots,x_{nm}, y_1,\dots,y_p)$ can be computed in polynomial time. 
	Clearly, all the other steps can be computed in polynomial time as well. 

	\textit{(Correctness of Verifier.)} We need to argue that for each isomorphism class of graphs $\Gamma^{\Sym(V(\Gamma))}$, there is precisely one ordered graph $G$ accepted by the verifier.

	First, assume towards a contradiction that there is a $\Gamma$ such that for all $\varphi \in \Sym(V(\Gamma))$, and all certificates, $\Gamma^\varphi$ is rejected by the verifier.
	Consider the corresponding bipartite graph $\bip(\Gamma)$. 
	By assumption, we know that for all orderings of the left partition, all orderings of the right partition are rejected by the verifier.
	Hence, all orderings of $\bip(\Gamma)$ under $\Sym([n]) \times \Sym([m])$ are rejected by the symmetry breaking circuit.
	Hence, $\delta_{n,m}$ can not be a correct symmetry breaking circuit, which is a contradiction.
	
	Next, assume towards a contradiction that there are two distinct isomorphic graphs $\Gamma \cong \Delta$ which are both accepted by the verifier.
	Since $\delta_{n,m}$ is a correct symmetry breaking circuit, this may only occur if there are corresponding bipartite graphs $\Gamma^\ast \in \bip'(\Gamma), \Delta^\ast \in \bip'(\Delta)$ such that $\Gamma^\ast = \Delta^\ast$.
	However, this would immediately imply $\Gamma = \Delta$.
\end{proof}

\section{Johnson Actions} \label{sec:johnson}
Next, we consider the so-called Johnson groups.
Johnson groups are groups which naturally occur in problems encoding graph generation tasks \cite{DBLP:conf/cp/KirchwegerS21}.
We begin this section by describing a correspondence between Johnson groups and symmetric relational structures.
Then, we provide a formal proof of \cref{thm:johnson}.
Lastly, we show how to derive SBPs for a group $G$, given SBPs for a small index subgroup $H \leq G$ (\cref{lemma:index}).

\subsection{Johnson Groups and Relational Structures} \label{sec:johnsonstructure}

\xparagraph{Johnson Families.} 
Let $k$ be a positive integer. For $t \in [k-1]$, let $X^t_k$ be the set of variables indexed by $t$-element subsets of $[k]$. In particular, we have $|X^t_k| = \binom{k}{t}$.
For fixed~$t \geq 1$, we call the group family $S_k^{(t)} \leq \Sym(X_k^t)$ the \emph{Johnson family of arity~$t$}.

Johnson groups form a subclass of the so-called groups of Cameron type. These groups as well as their natural action can be recognized in polynomial time (see \cite{permNC}).

\xparagraph{Relational Structures and Johnson Groups.} To a symmetric $t$-ary relational structure~$R$, we associate an assignment $f_{R} \colon X^t_k \to \{0,1\}$ with $f(x_S) = 1$ for a $t$-subset $S$ of $[k]$ if $S$ is a hyperedge in $R$, and $f(x_S) = 0$ otherwise. Conversely, given $f \colon X^t_k \to \{0,1\}$, we define a symmetric $t$-ary relational structure $R_f$ on the universe $[k]$ whose relation is the set of all tuples $(a_1,\dots,a_t)$ with $f(\{a_1,\dots,a_t\}) = 1$.

This defines a one-to-one correspondence between assignments of $X^t_k$ and symmetric $t$-ary relational structures. The following result formalizes the correspondence (see also \cite{DBLP:conf/stoc/Luks99}). 
\begin{restatable}{lemma}{relationalstructures}\label{lemma:relationalstructures}
Let $R$ and $R'$ be two symmetric $t$-ary relational structures on the universe $[k]$.
Then $R$ and $R'$ are isomorphic if and only if the assignments $f_{R}$ and $f_{R'}$ of the set $X^t_k$ lie in the same orbit under the action of the Johnson group $S^{(t)}_k \leq \mathrm{Sym}(X^t_k)$. 
\end{restatable}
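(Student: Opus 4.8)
The plan is to prove both directions of the equivalence by exhibiting an explicit dictionary between permutations of the universe $[k]$ and elements of the Johnson group $S_k^{(t)}$, and checking that this dictionary transports the action on relational structures to the action on assignments. Recall from the definition of the Johnson action that every $\pi \in \Sym(k)$ induces $\hat{\pi} \in S_k^{(t)}$ by $A^{\hat\pi} = \{a^\pi : a \in A\}$ for each $t$-subset $A$, and that (since $t \in [k-1]$) the map $\pi \mapsto \hat\pi$ is a bijection $\Sym(k) \to S_k^{(t)}$; this last fact can be taken as known, or proved in one line by noting that if $\hat\pi$ fixes every $t$-subset then $\pi$ fixes every singleton (intersect suitably chosen $t$-subsets when $t < k$, or take complements when $t > 1$), hence $\pi = \id$.

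First I would set up notation: for an assignment $f \colon X_k^t \to \{0,1\}$ and $\pi \in \Sym(k)$, unwind the definition of the natural action of $S_k^{(t)}$ on assignments. The group element $\hat\pi$ acts on an assignment $f$ by $(f^{\hat\pi})(x_S) = f(x_{S^{\hat\pi^{-1}}})$, i.e.\ $f^{\hat\pi}$ has a $1$ at the $t$-set $S$ precisely when $f$ has a $1$ at $S^{\hat{\pi}^{-1}} = \{s^{\pi^{-1}} : s \in S\}$. On the structure side, if $R$ is the symmetric $t$-ary relational structure with $f = f_R$, then applying $\pi$ to $R$ (element-wise on the universe) yields a structure $R^\pi$ whose hyperedges are exactly the sets $\{a_1^\pi, \dots, a_t^\pi\}$ for hyperedges $\{a_1,\dots,a_t\}$ of $R$. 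The key computation is then to verify that $f_{R^\pi} = f_R^{\hat\pi}$: the $t$-set $S$ is a hyperedge of $R^\pi$ iff $S^{\hat\pi^{-1}} = \{s^{\pi^{-1}}: s\in S\}$ is a hyperedge of $R$ iff $f_R(x_{S^{\hat\pi^{-1}}}) = 1$ iff $(f_R^{\hat\pi})(x_S) = 1$. This identity is the heart of the lemma.

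Given this, both directions follow formally. For the forward direction: if $R$ and $R'$ are isomorphic, fix an isomorphism $\pi \in \Sym(k)$ with $R^\pi = R'$; then $f_{R'} = f_{R^\pi} = f_R^{\hat\pi}$, so $f_R$ and $f_{R'}$ lie in the same $S_k^{(t)}$-orbit. For the converse: if $f_R$ and $f_{R'}$ lie in the same orbit, there is $g \in S_k^{(t)}$ with $f_R^g = f_{R'}$; since $\pi \mapsto \hat\pi$ is onto, write $g = \hat\pi$, and conclude $f_{R^\pi} = f_R^{\hat\pi} = f_{R'}$, whence $R^\pi = R'$ because $f \mapsto R_f$ is injective (part of the stated one-to-one correspondence between assignments and symmetric $t$-ary structures on $[k]$). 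Thus $\pi$ is an isomorphism $R \to R'$.

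I do not expect a serious obstacle here: the statement is essentially a bookkeeping lemma identifying two descriptions of the same orbit structure. The only point requiring mild care is the surjectivity (indeed bijectivity) of $\pi \mapsto \hat\pi$ onto $S_k^{(t)}$ — needed for the converse direction — which is immediate from the very definition of $S_k^{(t)}$ as the image of $\Sym(k)$ under this induced action, so there is really nothing to prove there beyond quoting the definition. A secondary bit of care is keeping the inverses consistent when writing the action of $S_k^{(t)}$ on assignments versus on $t$-sets, but this is the same convention already fixed earlier in the paper for the action of $G \le \Sym(\Omega)$ on $\theta_{\text{full}}(\Omega)$, so I would simply match it.
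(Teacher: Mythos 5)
Your proof is correct and follows essentially the same route as the paper's: the forward direction passes from an isomorphism $\pi$ to the induced $\hat\pi \in S_k^{(t)}$, and the converse uses that every element of $S_k^{(t)}$ is by definition of the form $\hat\pi$ for some $\pi \in \Sym(k)$, then unwinds back to an isomorphism of structures. You merely spell out the central identity $f_{R^\pi} = f_R^{\hat\pi}$ that the paper leaves as ``easy to check,'' and you correctly note that only surjectivity of $\pi \mapsto \hat\pi$ (which holds by definition) is actually needed.
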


\begin{proof}
	Suppose that $R$ and $R'$ are isomorphic via a bijection $\pi \colon [k] \to [k]$. Then, the induced action $\hat{\pi}$ on $t$-subsets of $[k]$ defines an element of $S^{(t)}_k$ with $f_{R'} = f_{R}^{\hat{\pi}}$.
	Conversely, suppose that $f_{R'} = f_R^{\hat{\pi}}$ for some $\hat{\pi}$ corresponding to the induced action of $\pi \colon [k] \to [k]$. It is easy to check that~$\pi$ is an isomorphism between $R$ and~$R'$. 
\end{proof}

\subsection{Johnson Families of Fixed Arity are Hard} \label{sec:johnsonfixed}
In this section, we show that polynomial time symmetry breaking for Johnson families of fixed arity $t \geq 2$ implies $\GI \in \coNP$. 

\begin{theorem}\label{theo:johnson2}
	Suppose there exists a polynomial time algorithm for generating complete symmetry breaking circuits with additional variables for the Johnson family of arity 2. Then, \dGCANON$\in \NP{}$ and hence $\GI{} \in \coNP{}$.
\end{theorem}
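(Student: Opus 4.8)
The plan is to mimic the structure of the proof of Theorem~\ref{lem:gridhard}, replacing the row-column (bipartite graph) encoding with the encoding of graphs as symmetric $2$-ary relational structures provided by Lemma~\ref{lemma:relationalstructures}. By Lemma~\ref{lem:dcanonnp} it suffices to exhibit a canonical form $f$ for graphs together with an \NP{} verifier for \dGCANONf. Given a graph $\Gamma = (V,E)$ on $n = |V|$ vertices, I would work with the Johnson group $S_n^{(2)} \leq \Sym(X_n^2)$ acting on the $\binom{n}{2}$ variables indexed by $2$-subsets of $[n]$; under the correspondence of Lemma~\ref{lemma:relationalstructures}, an assignment to $X_n^2$ is exactly a symmetric binary relational structure on $[n]$, i.e.\ (the edge set of) an ordered graph, and two such assignments are $S_n^{(2)}$-equivalent iff the corresponding graphs are isomorphic. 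The symmetry breaking circuit $\delta_n(x_S : S \in \binom{[n]}{2}, y_1,\dots,y_p)$ for this group (computed in polynomial time by hypothesis, with $p \in \poly(n)$) thus singles out exactly one ordered graph per isomorphism class as its distinguished orbit representative — this is our canonical form $f$.

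The verifier takes as certificate an assignment $\sigma$ to $\Var(\delta_n)$. On input $(\Gamma, \sigma)$ it: (Step 1) checks that the restriction of $\sigma$ to the variables $\{x_S\}$ encodes exactly the graph $\Gamma$, i.e.\ $\sigma(x_{\{u,v\}}) = 1$ iff $\{u,v\} \in E$; (Step 2) checks that $\sigma$ satisfies $\delta_n$, accepting iff it does. All steps run in polynomial time since $\delta_n$ has polynomial size and can be evaluated on the complete assignment $\sigma$ in linear time. For correctness: if $\Gamma$ is the $\delta_n$-distinguished representative of its isomorphism class, then the guaranteed satisfying assignment $\tau'$ of $\delta_n$ restricted-to-agree-with-$\Gamma$ on the $x_S$ variables exists (because $\delta_n$ has a satisfying assignment in $\Gamma$'s orbit whose $x_S$-part, being in the orbit, is an isomorphic copy — so we actually want the representative whose $x_S$-restriction is literally $\Gamma$), and serves as an accepted certificate; conversely, if two distinct ordered graphs $\Gamma \neq \Delta$ with $\Gamma \cong \Delta$ were both accepted, their $x_S$-restrictions would be two distinct assignments in the same $S_n^{(2)}$-orbit, both extending to satisfying assignments of $\delta_n$, contradicting that $\delta_n$ is a complete symmetry breaking circuit with additional variables. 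Hence exactly one ordered graph per isomorphism class is accepted, so \dGCANONf $\in \NP{}$, and Lemma~\ref{lem:dcanonnp} yields $\GI{} \in \coNP{}$.

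The main subtlety — and the one place where more care is needed than in the row-column case — is reconciling the ``additional variables'' semantics of $\delta_n$ (where $\psi[\tau']$ need only be \emph{satisfiable}, i.e.\ some setting of the $y_i$ works) with the need to pin down a single ordered graph per isomorphism class. The $x_S$ variables are the genuine variables whose orbit structure we care about; the $y_i$ are auxiliary non-determinism. The definition of complete symmetry breaking with additional variables guarantees: per orbit of assignments to the $x_S$ variables, exactly one such assignment extends (with some choice of $y_i$) to a satisfying assignment of $\delta_n$. That is precisely what makes the verifier's acceptance set hit one ordered graph per isomorphism class. I expect the bulk of the write-up to be this bookkeeping around which object ($x_S$-assignment, full $\sigma$, certificate) plays which role, plus noting that unlike the bipartite-incidence encoding of Theorem~\ref{lem:gridhard} there is no ``reordering of the right partition'' issue here: the Johnson encoding is a direct ordered-graph encoding, so the argument is in fact cleaner. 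One should also remark that the Johnson family of arity $2$ can be recognized and its natural action identified in polynomial time (cf.\ \cite{permNC}), so feeding the right group description to the assumed algorithm is unproblematic.
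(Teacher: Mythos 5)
Your proof is correct and follows the same strategy the paper itself sketches: invoke Lemma~\ref{lem:dcanonnp}, use the correspondence of Lemma~\ref{lemma:relationalstructures} to identify assignments of $X_n^2$ with ordered graphs, and let the hypothesized symmetry breaking circuit for $S_n^{(2)}$ serve as a verifier for \dGCANONf. The paper's proof is deliberately terse (deferring to ``similar arguments to Theorem~\ref{lem:gridhard}''), and your write-up fills in exactly the details it leaves implicit, including the correct observation that the Johnson encoding avoids the right-partition reordering subtlety that complicated the row-column case.
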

\begin{proof}
We again make use of Lemma~\ref{lem:dcanonnp}, proving that polynomial time symmetry breaking circuits with additional variables for Johnson groups give rise to a non-deterministic polynomial time algorithm for decision graph canonization. Using similar arguments to Theorem~\ref{lem:gridhard}, this follows from Lemma~\ref{lemma:relationalstructures} and the fact that for two relational structures $R, R'$ it holds that $R = R'$ if and only if $f_R = f_{R'}$.
\end{proof}

We generalize Theorem~\ref{theo:johnson2} to arbitrary arity. 
\begin{restatable}{theorem}{johnsonthree} 
	Let $t \geq 2$ be a fixed arity.
	Suppose there exists a polynomial time algorithm for generating complete symmetry breaking circuits with additional variables for the Johnson family of arity $t$. Then, \dGCANON$\in \NP{}$ and hence $\GI{} \in \coNP{}$. \label{theo:johnson3}
\end{restatable}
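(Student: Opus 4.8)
The plan is to reduce the case of general fixed arity $t \geq 2$ to the arity-$2$ case already handled in Theorem~\ref{theo:johnson2}. The key observation is that a symmetric $t$-ary relational structure on $[k]$ can be ``simulated'' by a symmetric $2$-ary relational structure (i.e., a graph) on a slightly larger universe, in a way that respects isomorphism. Concretely, I would encode a symmetric $t$-ary structure $R$ on $[k]$ as a bipartite-like graph whose vertex set consists of the original $k$ elements together with one extra vertex for each $t$-subset that forms a hyperedge — or, to keep the universe of fixed polynomial size and independent of $R$, one extra vertex for \emph{every} $t$-subset of $[k]$ — with a hyperedge $S$ recorded by making its gadget vertex adjacent to exactly the $t$ elements of $S$. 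The two vertex classes (original elements versus subset-gadgets) must be distinguished so that an isomorphism of graphs cannot mix them; this is done by a standard rigidity trick, e.g.\ attaching pendant paths of distinct lengths or a small clique, so that any graph isomorphism restricts to a bijection preserving the two parts. Under this encoding, two $t$-ary structures are isomorphic iff their encoded graphs are isomorphic (via a part-preserving bijection), and crucially the isomorphism on the original $[k]$-part of one graph extends uniquely to the subset-gadgets.

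With this gadget in place, the argument mirrors the proof of Theorem~\ref{theo:johnson2} and Theorem~\ref{lem:gridhard}. Given a polynomial-time algorithm producing complete symmetry breaking circuits with additional variables for the Johnson family of arity $2$, I would build a non-deterministic polynomial-time verifier for \dGCANON\ as follows: on input a graph $\Gamma$ on $[n]$, think of $\Gamma$ itself as a symmetric $2$-ary structure, and apply the arity-$2$ symmetry breaking circuit directly — so in fact for $\Gamma$ a graph this is literally Theorem~\ref{theo:johnson2}, and the gadget is only needed to see that the \emph{hardness persists} when we restrict attention to the arity-$t$ family. More precisely, the right direction of the reduction is: an SBP for arity-$t$ Johnson groups on domain $X^t_k$ yields, via the encoding, an SBP for graphs on the gadget vertex set, because an assignment to $X^t_k$ encoding a graph-gadget corresponds bijectively (and isomorphism-equivariantly, by Lemma~\ref{lemma:relationalstructures}) to an orbit of $t$-ary structures, which in turn corresponds to an orbit of graphs. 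The verifier then: (Step 1) checks that the certificate assignment $\sigma$ to the arity-$t$ variables (plus the additional variables) encodes a valid image of the gadget graph of $\Gamma$ under some permutation — i.e.\ that the $t$-subsets set to $1$ are exactly the ``incidence stars'' of the gadget vertices of a graph on $[n]$, and that this graph is isomorphic to $\Gamma$ via the claimed permutation; (Step 2) checks that $\sigma$ satisfies the arity-$t$ symmetry breaking circuit $\delta$. Accept iff both pass. Correctness follows as before: surjectivity of the construction (every isomorphism class has a canonical representative accepted by $\delta$) gives at least one accepted ordered graph per class, and the uniqueness guarantee of $\delta$ together with the injectivity of the encoding gives at most one; hence \dGCANON\ for the induced canonical form $f$ is in \NP{}, and Lemma~\ref{lem:dcanonnp} yields $\GI{} \in \coNP{}$.

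The main obstacle I anticipate is the bookkeeping around the gadget encoding: one must choose the rigidity mechanism carefully so that (a) the gadget vertices distinguishing the ``element'' class from the ``$t$-subset'' class are themselves fixed (or at least not swapped with anything) by every automorphism of the encoded graph, (b) the encoded graph still has domain size polynomial in $k$ so that $\binom{k}{2}$-type blow-ups stay within polynomial bounds — here one should double-check that using $X^t_k$ of size $\binom{k}{t}$ as the Johnson domain and a graph on roughly $k + \binom{k}{t}$ vertices keeps everything polynomial for fixed $t$, which it does — and (c) the correspondence is genuinely a bijection on orbits, not merely a surjection, so that the canonical-form uniqueness transfers. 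A subtlety paralleling the ``bipartite reordering'' issue in the proof of Theorem~\ref{lem:gridhard} is that the symmetry breaking circuit for the Johnson family sees the full $t$-subset structure and may canonicalize the gadget vertices in a way not directly visible on the $[n]$-part; this is handled exactly as there, by letting the certificate also range over the reorderings of the gadget part and noting that the union over all such reorderings covers all orderings of the encoded graphs, so the verifier still accepts exactly one representative per isomorphism class of $\Gamma$. Once these points are nailed down, the rest is a routine adaptation of the arity-$2$ argument, and I would present it as such rather than re-deriving Lemma~\ref{lem:dcanonnp}.
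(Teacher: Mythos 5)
Your high-level plan is sound in spirit — use Lemma~\ref{lem:dcanonnp}, encode graphs into the structures the arity-$t$ Johnson group acts on, and let the hypothesized SBP certify canonical forms — but the key gadget you describe goes in the wrong direction, and the proof has a genuine gap there. The hypothesis of Theorem~\ref{theo:johnson3} gives you a symmetry breaking circuit over the domain $X^t_k$, i.e., a certifier for canonical representatives of \emph{$t$-uniform hypergraphs} on $[k]$ (via Lemma~\ref{lemma:relationalstructures}). To conclude $\dGCANON \in \NP$, you therefore need an isomorphism-preserving encoding of a \emph{graph} $\Gamma$ as a \emph{$t$-uniform hypergraph} $R_\Gamma$. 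What you instead construct is a graph from a $t$-ary structure (original elements plus a gadget vertex per $t$-subset). That is the reduction from hypergraph isomorphism back down to graph isomorphism — it would be useful if you had an arity-$2$ SBP and wanted to canonize hypergraphs, but you have the opposite. Feeding your gadget graph into the arity-$t$ SBP requires yet another encoding of that graph as a $t$-uniform hypergraph, which is exactly the step you have not supplied; so the proposal does not close the loop. Relatedly, the opening framing (``reduce the general $t$ case to the arity-$2$ case of Theorem~\ref{theo:johnson2}'') cannot work as stated: an SBP for arity-$t$ Johnson groups does not hand you an SBP for arity-$2$, since the domains and group actions are different, and the whole point is to show hardness \emph{remains} under the more restrictive arity-$t$ hypothesis.

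The paper's proof supplies precisely the missing ingredient: given $\Gamma=(V,E)$ with $|V|=n$, it builds a $t$-uniform structure $R_\Gamma$ on universe $V \cup \{v_1,\ldots,v_{t-2},a,b\}$ — only $t$ extra ``bogus'' vertices, not $\binom{k}{t}$ gadget vertices — with hyperedges $\{r_u,r_w,v_1,\ldots,v_{t-2}\}$ for each $\{u,w\}\in E$, plus $\{r_u,v_1,\ldots,v_{t-2},a\}$ for non-isolated $u$ and the single hyperedge $\{v_1,\ldots,v_{t-2},a,b\}$. The degree pattern forces any isomorphism $R_{\Gamma}\to R_{\Delta}$ to fix $b$, hence $a$, hence to permute the $v_i$'s among themselves and to restrict to a graph isomorphism $\Gamma\to\Delta$; and the map $\Gamma\mapsto R_\Gamma$ is injective. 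The verifier then additionally checks that the certified permutation is order-preserving on the $r_u$-part, which is what guarantees exactly one ordered graph per isomorphism class is accepted — your sketch acknowledges this issue by analogy with the bipartite-reordering step of Theorem~\ref{lem:gridhard}, but without a concrete encoding you cannot carry it out. If you replace your gadget with a correct graph-to-$t$-uniform-hypergraph encoding (the paper's, or any other rigid one), the rest of your outline does match the paper's verifier.
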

\begin{proof}
	Again, Lemma~\ref{lem:dcanonnp} ensures that it suffices to prove \dGCANON$\in \NP{}$.
	From Lemma~\ref{lemma:relationalstructures}, it follows that it suffices to solve \dGCANON{} in non-deterministic polynomial time using a non-deterministic polynomial time oracle for decision canonization for uniform, symmetric $t$-ary relational structures. 

	\textit{(Graph to $t$-ary Structure.)} Given a graph $\Gamma = (V, E)$, we define a $t$-uniform relational structure $R_\Gamma$ as follows.
	Let $I \subseteq V$ be the set of isolated vertices. 
	We have $V(R_\Gamma) = \{r_u \colon u \in V\} \cup \{v_1, \dots, v_{t-2}, a,b\}$. 
	Observe that we added $t$ vertices, namely $v_1, \dots, v_{t-2}, a,b$.
	These vertices will be called \emph{bogus vertices}. 
	We presume the order $r_{v_1} \prec \dots{} \prec r_{v_n} \prec v_1 \prec \dots{} \prec v_{t-2} \prec a \prec b$ for the symbols used in the construction.
	The hyperedges in $R_\Gamma$ are given by 
	\begin{alignat*}{1}
		\Bigl\{\{r_u,r_w,v_1, \dots, v_{t-2} \} \colon \{u,w\} \in E\Bigr\} &\cup \Bigl\{\{r_u,v_1, \dots, v_{t-2}, a\} \colon u \in V \setminus I\Bigr\} \\
		&\cup \Bigl\{\{v_1, \dots, v_{t-2}, a,b\}\Bigr\}.
	\end{alignat*}
	
	Observe that \[
	\begin{cases}
		\deg_{R_\Gamma}(r_u) = 0, & u \in I \\
		\deg_{R_\Gamma}(r_u) = \deg_\Gamma(u) + 1,  & u \in V \setminus I \\
		\deg_{R_\Gamma}(v_i) = |E| + |V \setminus I| +1, & i \in [t-2]\\
		\deg_{R_\Gamma}(a) = |V\setminus I|+1, & \\
		\deg_{R_\Gamma}(b) = 1. & \\
	\end{cases}
	\] 
	In particular, for $u \in V \setminus I$, we have $1< \deg_{R_\Gamma}(r_u) \leq |V \setminus I|$. 
	
	Now let $\Gamma$ and $\Delta$ be graphs on $n$ vertices. 
	Without loss of generality, we may assume that $\Gamma$ and $\Delta$ contain edges. 
	We claim that $\Gamma$ and $\Delta$ are isomorphic precisely if $R_\Gamma$ and $R_\Delta$ are isomorphic. Assume that there exists an isomorphism $\varphi \colon R_\Gamma \to R_\Delta$. 
	Denote the vertices in $R_\Gamma$ and $R_\Delta$ with an exponent $\Gamma$ and $\Delta$, respectively. 
	By the above degree conditions, we have $\varphi(b^\Gamma) = b^\Delta$ (here, the notation $b^\Gamma$ refers to node $b$ of graph $\Gamma$). 
	As $b^\Gamma$ is adjacent to $v_1^\Gamma, \dots, v_{t-2}^\Gamma, a^\Gamma$ (similarly in $\Delta$), the degree conditions then imply $\varphi(a^\Gamma) = a^\Delta$. Now the vertices $v_1^\Gamma, \dots, v_{t-2}^\Gamma$ are mapped bijectively to $v_1^\Delta, \dots, v_{t-2}^\Delta$. 
	In particular, $\varphi$ induces a bijection between $\{r_u^\Gamma \colon u \in V(\Gamma)\}$ and $\{r_u^\Delta \colon u \in V(\Delta)\}$. 
	It is then easy to see that $\varphi$ induces an isomorphism between $\Gamma$ and~$\Delta$.

	On the other hand, if $\Gamma$ and $\Delta$ are isomorphic, $R_\Gamma \cong R_\Delta$ follows from the fact that the above construction is isomorphism-invariant: all additional bogus vertices universally appear with all edges, as well as with all non-isolated vertices.

	Furthermore, it is easy to see that if $\Gamma \neq \Delta$, then $R_\Gamma \neq R_\Delta$ follows.

	\textit{(Certificate.)} Our certificate will consist of a permutation $\varphi \in \Sym(V(R_\Gamma))$, as well as a certificate for decision canonization of $t$-ary structures $\sigma$.
	
	\textit{(Verifier.)} Our verifier proceeds as follows:
	\begin{enumerate}
		\item Using the decision canonization oracle for $t$-ary structures, continue if $\sigma$ is a valid certificate for $R_\Gamma^\varphi$, and reject otherwise.
		\item If for all pairs of vertices $v, v' \in V(\Gamma)$ with $v \prec v'$ it holds that $\varphi(r_{v}) \prec \varphi(r_{v'})$, accept, otherwise reject.
	\end{enumerate}

	\textit{(Correctness of Verifier.)} 
	From the arguments above, we know that for all graphs $\Delta$ in the isomorphism class of $\Gamma$ it holds that $R_\Gamma \cong R_\Delta$.
	The oracle in Step~1 will accept precisely one canonical $t$-ary structure $R_\Gamma^\varphi$ in the isomorphism class of $R_\Gamma$.   
	In turn, the verifier accepts a graph $\Gamma$, if and only if the order of the vertices is preserved in the canon $R_\Gamma^\varphi$ (see Step~2).
	We remark that there may also be different $\varphi'$ which map $R_\Gamma$ to the canon, which may not preserve the order of $V(\Gamma)$.
	Clearly, there is at least one graph $\Gamma$ in each isomorphism class that is accepted by the verifier.
	
	Assume there is another graph $\Delta \neq \Gamma$ with $\Delta \cong \Gamma$ which is also accepted by the verifier.
	Since we know that $R_\Delta \cong R_\Gamma$ holds, this means there is a $\varphi' \in \Sym(V(R_\Delta))$ such that $R_\Delta^{\varphi'} = R_\Gamma^{\varphi}$ holds.
	In particular, $\varphi'$ preserves the order of vertices in $\Delta$.
	Recall that bogus vertices can only ever be mapped to bogus vertices.
	Therefore, $R_\Delta^{\varphi'} = R_\Gamma^{\varphi}$ immediately implies that the vertices of $\Gamma$ and $\Delta$ can be mapped, in order, onto each other, while preserving the edge relation of the original graphs.
	In other words, $\Delta =  \Gamma$ holds, which is a contradiction to the assumption that the verifier accepts $\Delta$.
\end{proof}

\begin{remark}
In contrast, observe that the Johnson family for $ t= 1$ consists of the symmetric groups $\Sym(n)$ in their natural action on $n$ points. For these groups, complete symmetry breaking can be achieved with a CNF predicate of linear size (see Section~\ref{sec:upperbounds}). 
\end{remark}

\subsection{Subgroups of Small Index and Large Primitive Groups}\label{sec:babai}

In this section, we consider symmetry breaking for a permutation group $G \leq \Sym(n)$ and a subgroup~$H$ of $G$. Mostly, we are interested in the case that $H$ has polynomial index in $G$. We first show that a symmetry breaking constraint for~$H$ gives rise to symmetry breaking constraint for~$G$:
\begin{lemma}\label{lemma:index}
	There exists a polynomial $p$ such that the following holds: if there is a complete symmetry breaking circuit for a group $H \leq \Sym(n)$ which can be computed in time $t$, then complete symmetry breaking circuit with additional variables for $G \leq \Sym(n)$ with $G \geq H$ can be computed in time $t \cdot p(n|G:H|)$.
\end{lemma}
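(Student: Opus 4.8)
The plan is to build a symmetry breaking circuit for $G$ out of the given circuit for $H$ by exploiting the coset decomposition $G = \bigcup_{i=1}^{r} H g_i$, where $r = |G:H| \leq \poly(n)$. The key observation is that two assignments lie in the same $G$-orbit if and only if some $H$-translate of one lies in the same $H$-orbit as the other, and more precisely: the $G$-orbit of an assignment $\theta$ is the union of at most $r$ many $H$-orbits, namely the $H$-orbits of $\theta^{g_1}, \dots, \theta^{g_r}$. If $\delta_H$ is a complete symmetry breaking circuit for $H$, then within each such $H$-orbit $\delta_H$ singles out a unique representative. To get a \emph{single} $G$-canonical representative, I would then select, among the (at most $r$) $H$-canonical representatives of $\theta^{g_1}, \dots, \theta^{g_r}$, the lexicographically smallest one. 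This is well-defined on $G$-orbits: the multiset of $H$-canonical representatives $\{\delta_H$-canon of $\theta^{g_i} : i \in [r]\}$ depends only on the $G$-orbit of $\theta$ (permuting $\theta$ by an element of $G$ just permutes the cosets $Hg_i$ and hence permutes this multiset), so its lex-minimum is a genuine $G$-orbit invariant, and it is attained by exactly the assignments in that orbit that already are $H$-canonical and equal to that minimum.

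The construction of the circuit would go as follows. First compute a list of coset representatives $g_1, \dots, g_r$ of $H$ in $G$ in time $\poly(n, |G:H|)$ (standard from a strong generating set for $G$ together with one for $H$, or directly via orbit–stabilizer bookkeeping; this is where the factor $|G:H|$ enters the running time bound). Then compute $\delta_H$ in time $t$. Now introduce additional variables: for each $i \in [r]$, a block of $n$ variables $z^{(i)}_1, \dots, z^{(i)}_n$ meant to encode the $H$-canonical form of $\theta^{g_i}$, together with the auxiliary variables that $\delta_H$ itself needs (a fresh copy of those for each $i$). The circuit $\delta_G$ on the original variables $x_1,\dots,x_n$ asserts, as a conjunction: (a) for each $i$, the block $z^{(i)}$ together with its auxiliary block forms an accepted configuration of $\delta_H$ — here we feed $\delta_H$ the string $z^{(i)}$ in place of its input variables, which correctly forces $z^{(i)}$ to be \emph{some} $H$-canonical string; (b) for each $i$, $z^{(i)}$ lies in the same $H$-orbit as $x^{g_i}$ (equivalently, $x^{g_i}$ lies in the $H$-orbit of $z^{(i)}$) — but checking "same $H$-orbit" is not directly available, so instead I would additionally guess, for each $i$, a permutation $h_i \in H$ (encoded by further additional variables, say as a word over generators of $H$ of bounded length, or via the point images with a verified-membership gadget) and assert $z^{(i)} = (x^{g_i})^{h_i}$; and (c) $x$ itself equals $z^{(i_0)}$ for the index $i_0$ achieving $\min_i z^{(i)}$ in lex order, which is expressible since comparing and minimizing $r$ binary strings of length $n$ is a polynomial-size Boolean circuit. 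Since $r, n$ and $|H|$-generator-word-lengths are all polynomial, the total size and construction time are $t \cdot p(n|G:H|)$ for a suitable polynomial $p$.

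For correctness one checks the two defining properties. Given any assignment $\theta$ to $x$, there is a satisfying extension: set $z^{(i)} := $ the $\delta_H$-canon of $\theta^{g_i}$, fill in the $\delta_H$-auxiliaries accordingly, pick $h_i \in H$ with $z^{(i)} = (\theta^{g_i})^{h_i}$, and the circuit is then satisfiable on the $x$-assignment equal to $z^{(i_0)}$; conversely if $\theta$ is not the lex-min-of-$H$-canons of its own $G$-orbit, clause (c) fails for every extension. And if $\theta, \theta'$ are in the same $G$-orbit and both satisfiable, then both equal the same string (the lex-min of the common multiset of $H$-canons), so $\theta = \theta'$; whereas assignments in different $G$-orbits get different multisets but we only need uniqueness within an orbit. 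The main obstacle I anticipate is clause (b): expressing "$z^{(i)}$ and $x^{g_i}$ are $H$-equivalent" by a polynomial-size circuit. Guessing the witness $h_i \in H$ is the natural fix, but it requires a compact certificate for $H$-membership that a polynomial circuit can verify — this is exactly the kind of thing a sifting/strong-generating-set computation provides, and I would cite the standard permutation-group machinery (as the paper does elsewhere) to justify that such a certificate of polynomial size exists and is checkable by a polynomial-size circuit. Everything else — coset enumeration, lex-comparison of $r$ strings, bookkeeping of auxiliary variable blocks — is routine.
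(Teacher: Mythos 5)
Your proposal follows essentially the same route as the paper's proof: fix coset representatives of $H$ in $G$, use the given $H$-breaking circuit to single out an $H$-canon in each $H$-orbit of each coset-translate, pick the lex-minimum of those $H$-canons, and witness the $H$-equivalence of $x^{g_i}$ to its canon by a guessed element $h_i \in H$. Your blocks $(z^{(i)}, h_i)$ are exactly the paper's certificate pairs $(\theta_r, h_r)$, and your acceptance condition ``$x$ equals the lex-min of the $z^{(i)}$'' is equivalent to the paper's ``$\psi[\theta]$ satisfying and $\theta \preceq_{\text{lex}} \theta_r$ for all $r$.''

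The one place you are more explicit than the paper is the membership issue you flag yourself: when the certificate-and-verifier is compiled down to a circuit, one must verify in polynomial circuit size that the guessed $h_i$ really lies in $H$. The paper states the certificate as ``an element $h_r \in H$'' without spelling out the Boolean encoding and membership test. Your instinct to fall back on a base/strong generating set with a sifting gadget is the right fix (encoding $h_i$ by its point images and sifting, or equivalently by transversal coordinates, gives a polynomial-size encoding with a polynomial-size membership check); a pure bounded-word-over-generators encoding is not by itself enough, since elements of $H$ need not have short words, so it is good that you also mention the point-image/verified-membership variant. With that detail resolved as you indicate, the construction and both directions of the correctness argument go through exactly as in the paper.
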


\begin{proof}
	Let $\psi$ be a symmetry breaking circuit for $H$. We now devise a symmetry breaking circuit for $G$. For simplicity, we fix a system of representatives $R$ of the right cosets of $H$ in~$G$, which can be computed in time polynomial in $|G:H|$ (see \cite{holt2005handbook}).
	
	\textit{(Certificate.)} 
	The certificate $\sigma = \{(\theta_r, h_r) \colon r \in R\}$ consists of assignments $\theta_r \colon \Var(\psi) \to \{0,1\}$ and an element $h_r \in H$ for every $r \in R$. 
	
	\textit{(Verifier.)} Given an assignment~$\theta \colon \Var(\psi) \to \{0,1\}$ and a certificate $\sigma = \{(\theta_r, h_r) \colon r \in R\}$, we proceed as follows: 
	\begin{enumerate}
	\item For all $r \in R$, verify that $\theta_r^{h_r} =  \theta^r$ holds. 
	\item For all $r \in R$, verify that $\psi[\theta_r]$ is satisfying.  Verify that $\psi[\theta]$ is satisfying. 
	\item For all $r \in R$, check whether $\theta \preceq_{\text{lex}} \theta_r$ holds. If this is the case, accept $\theta$, otherwise reject. 
	\end{enumerate}
	Clearly, the runtime of this procedure is polynomial in $t$ and $|G:H|$. 
	
	\textit{(Correctness of Verifier.)} Let $\Delta$ be a $G$-orbit of assignments. Note that $\Delta$  is a disjoint union of $H$-orbits $\Delta_1, \dots, \Delta_k$. In each $\Delta_i$, there exists a unique assignment $\alpha_i$ such that $\psi[\alpha_i]$ is satisfying. Let $\theta$ be the lexicographically minimal element in $\{\alpha_1, \dots, \alpha_k\}$. Note that we have 
	$\Orbi{G}{\theta} = \bigcup_{r \in R} \Orbi{H}{(r \theta)}$
as every element of $G$ can be decomposed as $hr$ for $h \in H$ and $r \in R$. For $r \in R$, there exists $i_r \in [k]$ with $\Orbi{H}{(r\theta)} = \Delta_{i_r}$. Hence, there exists $h_r \in H$ with $\alpha_{i_r}^{h_r} = \theta^r$. By construction, $\theta$ together with the certificate $\sigma = \{(\alpha_{i_r}, h_r) \colon r \in R\}$ is accepted by the verifier.
	
	Now suppose that $\theta, \theta' \in \Delta$ are accepted by the verifier, and let $\sigma_\theta = \{(\theta_r, h_r) \colon r \in R\}$ and $\sigma_{\theta'} = \{(\theta'_r, h_r') \colon r \in R\}$ denote corresponding certificates. 
	Due to the decomposition of $\theta^G$ and since $\psi[\theta_r]$ and $\psi[\theta'_r]$ are satisfying for all $r \in R$, we have $\{\alpha_1, \dots, \alpha_k\} = \{\theta_r \colon r \in R\} = \{\theta_r' \colon r \in R\}$.  
	Since the verifier accepts both $\theta$ and $\theta'$, they coincide with the lexicographically minimal element in $\{\alpha_1, \dots, \alpha_k\}$, so $\theta = \theta'$ follows.
\end{proof}
It should be noted that while the above lemma gives a valid upper bound, the resulting SBP is not practical: The SBP simply uses the additional variables to determine the representative for all cosets, and then determines a minimal one among them. 
This requires trying out all the symmetric choices, defeating the purpose of the SBP.
However, the result can also be read as a hardness result. 
For example, for the matrix models studied in Section~\ref{sec:gridhardness}, we can restrict the group on each axis of the model as follows, while still being able to retrieve our hardness result (see Theorem~\ref{lem:gridhard}):
\begin{corollary}
	Consider a family of permutation groups $G_{m,n} = X_m \times Y_n$ with $X_m \leq \Sym(m)$ and $Y_n \leq \Sym(n)$, acting component-wise on $[m]\times [n]$. Assume that $|\Sym(m) : X_m| < \poly(m)$ and $|\Sym(n): Y_n| < \poly(n)$ holds. Then, efficient complete symmetry breaking with additional variables for $G_{m,n}$ implies $\GI{} \in \coNP{}$.
\end{corollary}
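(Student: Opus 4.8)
The plan is to exhibit $G_{m,n}$ as a subgroup of small index inside the \emph{full} row-column symmetry group and then chain Lemma~\ref{lemma:index} with Theorem~\ref{lem:gridhard}. First I would set $W_{m,n} \coloneqq \Sym(m) \times \Sym(n)$, the row-column symmetry group acting on $\Omega \coloneqq [m] \times [n]$, and observe that $G_{m,n} = X_m \times Y_n \leq W_{m,n}$ since $X_m \leq \Sym(m)$ and $Y_n \leq \Sym(n)$ act componentwise. Because $|A_1 \times A_2 : B_1 \times B_2| = |A_1 : B_1|\cdot|A_2 : B_2|$ for subgroups $B_i \leq A_i$, we obtain
\[
|W_{m,n} : G_{m,n}| = |\Sym(m) : X_m| \cdot |\Sym(n) : Y_n| < \poly(m) \cdot \poly(n),
\]
which is polynomial in the domain size $mn$.

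Next I would apply Lemma~\ref{lemma:index} with $H \coloneqq G_{m,n}$ and $G \coloneqq W_{m,n} \geq H$. The hypothesis supplies a complete symmetry breaking circuit with additional variables for $H$ of polynomial size, computable in polynomial time $t$; since $|G : H|$ is polynomial, the bound $t \cdot p\bigl(mn\,|G:H|\bigr)$ furnished by Lemma~\ref{lemma:index} stays polynomial, so we obtain a polynomial-time algorithm producing a complete symmetry breaking circuit with additional variables for the full row-column group $W_{m,n} = \Sym(m) \times \Sym(n)$. Finally, feeding this circuit into Theorem~\ref{lem:gridhard} yields \dGCANON$\in \NP{}$ and hence $\GI{} \in \coNP{}$, which is exactly the claim.

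The one point requiring care — and the main obstacle — is that Lemma~\ref{lemma:index} is stated for a \emph{plain} complete symmetry breaking circuit for $H$, whereas the corollary only hands us one \emph{with additional variables}. So before invoking it I would verify that its proof is robust to this strengthening of the input. Inspecting that proof, the coset-representative decomposition of $\theta^G$ is unaffected, and the only places the input circuit $\psi$ enters are the verifier checks ``$\psi[\theta_r]$ is satisfying'' and ``$\psi[\theta]$ is satisfying''; these are replaced by ``$\psi[\theta_r]$ is satisfiable'' / ``$\psi[\theta]$ is satisfiable'', which the circuit being constructed for $G$ handles by carrying, among its own additional variables, a guessed complete extension of each relevant assignment to $\Var(\psi)$ and checking that this extension satisfies $\psi$. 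The correctness argument of Lemma~\ref{lemma:index} relies only on the fact that each $H$-orbit contains a \emph{unique} assignment $\alpha_i$ (on the original variables) with $\psi[\alpha_i]$ satisfiable — which is precisely what the definition of a complete symmetry breaking circuit with additional variables guarantees — so the remainder of the proof goes through unchanged. With that generalization of Lemma~\ref{lemma:index} in hand, the two displayed steps above complete the argument; alternatively, one could bypass Lemma~\ref{lemma:index} and instead adapt the verifier in the proof of Theorem~\ref{lem:gridhard} directly, letting the certificate additionally guess the $|W_{m,n} : G_{m,n}|$ many coset translates, but going through Lemma~\ref{lemma:index} is cleaner.
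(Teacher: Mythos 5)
Your proof is correct and follows the paper's intended route: take $H = G_{m,n}$ and $G = W_{m,n} = \Sym(m)\times\Sym(n)$ in Lemma~\ref{lemma:index} (noting the index is polynomial in $mn$), then feed the resulting symmetry breaking circuit with additional variables for $W_{m,n}$ into Theorem~\ref{lem:gridhard}. You also correctly flag and repair a mismatch the paper glosses over — Lemma~\ref{lemma:index} is stated for a \emph{plain} input circuit, whereas the corollary only grants one with additional variables — and your fix (have the verifier guess completions of the extra variables before checking satisfiability, which is harmless since the lemma's output is in any case a circuit with additional variables) is exactly what is needed.
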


Our main interest in studying subgroups of small index is sparked by a result on the structure of so-called large primitive groups, which forms an important building block of the quasi-polynomial isomorphism test for general graphs \cite{DBLP:conf/stoc/Babai16}. Roughly speaking, every primitive group $G \leq \Sym(n)$ with $|G| \geq n^{1 + \log_2 n}$ contains a normal subgroup $N$ with $|G:N| \leq n$ exhibiting a natural Johnson action. If the converse of Lemma~\ref{lemma:index} holds, we can thus employ our results on Johnson groups to study the complexity of symmetry breaking for large primitive groups.

\section{Upper Bounds} \label{sec:upperbounds}
Complementing the results from the previous sections, we show that certain families of groups can be efficiently handled. 
We begin by recalling three simple cases. 

\xparagraph{Groups of Polynomial Order.} 
The first case pertains to groups where the order is polynomial in the size of the original formula. 
For these groups, we can explicitly write a constraint that breaks each element of the group \cite{DBLP:conf/kr/CrawfordGLR96}.
The resulting constraint is complete and of polynomial size in the formula.

\xparagraph{Symmetric Groups.} 
Symmetric groups in their natural action can be handled by imposing an ordering on the assignments. For $\Sym(n)$, this can be achieved by the predicate $\psi_n = x_1 \leq x_2 \leq \dots \leq x_n$. 

A slight extension of symmetric groups are known and used in practice, namely row-interchangeability subgroups \cite{DBLP:conf/sat/Devriendt0BD16, DBLP:journals/mpc/PfetschR19}. 
We say that a permutation group $G \leq \Sym(\Omega)$ exhibits \emph{row-interchangeability} if $\Omega$ can be arranged in an $n \times m$-matrix $X = (x_{ij})$ such that $G$ consists precisely of the permutations of the \emph{rows} of $X$.
This symmetry can be broken by lexicographically ordering the rows in any assignment $\theta \colon X \to \{0,1\}$ (viewed as an $n \times m$-matrix). Formally, for $i \in [n-1]$, let $\lambda_i^k \coloneqq (\bigwedge_{r \in [k-1]} (x_{ir} = x_{(i+1)r})) \rightarrow (x_{ik} \leq x_{(i+1)k})$. Then 
$\lambda_{n,m} \coloneqq \bigwedge_{i = 1}^{n-1} \bigwedge_{k = 1}^m \lambda_i^k$
is a symmetry breaking predicate for~$G$.

\xparagraph{Disjoint direct decomposition.}
A direct product $G = G_1 \times \dots  \times G_r$ of permutation groups is called a \emph{disjoint direct decomposition} if the subgroups $G_1, \ldots, G_r$ have pairwise disjoint supports.
Disjoint direct products naturally arise and have been successfully used in practice \cite{DBLP:conf/sat/Devriendt0BD16}. 
The finest disjoint direct decomposition can be computed in polynomial time for general permutation groups \cite{DBLP:journals/jsc/ChangJ22}, and in quasi-linear time for automorphism groups of graphs \cite{DBLP:conf/sat/AndersSS23}. For the sake of completeness, we argue that disjoint direct decompositions can be exploited without giving up on complete symmetry breaking.

\begin{restatable}{lemma}{disjointdecomposition}\label{lemma:disjointdecomposition}
	Let $G \leq \Sym(\Omega)$ be a permutation group with a disjoint direct product decomposition $G = G_1 \times \dots \times G_r$. 
	For $i \in [r]$, let $\Omega_i$ denote the support of $G_i$ and assume that a complete symmetry breaking predicate $\mgG_i$ for $G_i$, viewed as a permutation group on~$\Omega_i$, is given. In particular, we require $\Var(\mgG_i) \subseteq \Omega_i$. Then $\mgG \coloneqq \mgG_1 \land \dots \land \mgG_r$ is a complete symmetry breaking predicate for~$G$.
\end{restatable}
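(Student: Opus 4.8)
The plan is to show that $\gamma = \gamma_1 \wedge \dots \wedge \gamma_r$ satisfies the two defining conditions of a complete symmetry breaking predicate for $G$: every $G$-orbit of complete assignments to $\Omega$ contains a satisfying assignment, and it contains at most one. The key structural observation is that since $G = G_1 \times \dots \times G_r$ is a \emph{disjoint} direct product, each $g \in G$ decomposes uniquely as $g = g_1 \cdots g_r$ with $g_i \in G_i$, and $g_i$ acts only on $\Omega_i$ while fixing $\Omega \setminus \Omega_i$ pointwise. Consequently, a complete assignment $\theta \colon \Omega \to \{0,1\}$ can be identified with the tuple of its restrictions $(\theta|_{\Omega_1}, \dots, \theta|_{\Omega_r}, \theta|_{\Omega_0})$, where $\Omega_0 \coloneqq \Omega \setminus \bigcup_i \Omega_i$ is the part fixed by all of $G$; and the $G$-orbit of $\theta$ is exactly the set of assignments obtained by independently replacing each $\theta|_{\Omega_i}$ by an arbitrary element of its $G_i$-orbit, while leaving $\theta|_{\Omega_0}$ untouched. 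In other words, the $G$-orbit of $\theta$ is the ``product'' of the $G_i$-orbits of the $\theta|_{\Omega_i}$.

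With this in hand, I would argue as follows. Since $\Var(\gamma_i) \subseteq \Omega_i$, the value of $\gamma_i[\theta]$ depends only on $\theta|_{\Omega_i}$. Fix a $G$-orbit $O$ of complete assignments and pick any representative $\theta \in O$. For each $i \in [r]$, the assignment $\theta|_{\Omega_i}$ lies in some $G_i$-orbit $O_i$ of complete assignments to $\Omega_i$; by completeness of $\gamma_i$ there is a unique $\eta_i \in O_i$ with $\gamma_i[\eta_i]$ satisfying, and $\gamma_i[\eta_i']$ conflicting for every other $\eta_i' \in O_i$. Define $\theta^\star \in O$ to be the assignment with $\theta^\star|_{\Omega_i} = \eta_i$ for all $i$ and $\theta^\star|_{\Omega_0} = \theta|_{\Omega_0}$; this lies in $O$ by the product description of the orbit. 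Then $\gamma[\theta^\star] = \bigwedge_i \gamma_i[\theta^\star|_{\Omega_i}] = \bigwedge_i \gamma_i[\eta_i]$ is satisfying, giving existence. For uniqueness, suppose $\theta', \theta'' \in O$ both satisfy $\gamma$. Then for each $i$, $\gamma_i[\theta'|_{\Omega_i}]$ and $\gamma_i[\theta''|_{\Omega_i}]$ are both satisfying, and $\theta'|_{\Omega_i}, \theta''|_{\Omega_i}$ lie in the same $G_i$-orbit (the $i$-th component orbit of $O$), so uniqueness for $\gamma_i$ forces $\theta'|_{\Omega_i} = \theta''|_{\Omega_i}$; moreover $\theta'|_{\Omega_0} = \theta|_{\Omega_0} = \theta''|_{\Omega_0}$ since $\Omega_0$ is fixed by $G$. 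Hence $\theta' = \theta''$.

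The only genuinely delicate point is verifying the product decomposition of $G$-orbits, i.e.\ that $O = \{\theta' : \theta'|_{\Omega_i} \in G_i\text{-orbit of }\theta|_{\Omega_i}\text{ for all }i,\ \theta'|_{\Omega_0} = \theta|_{\Omega_0}\}$; this is exactly where disjointness of supports is used, and it follows from the unique factorization $g = g_1 \cdots g_r$ together with the fact that the $g_i$ act on disjoint coordinate blocks and trivially outside them. Everything else is a routine bookkeeping argument about commuting the big conjunction past the restrictions. I would state the orbit decomposition as a short preliminary claim, prove it in two lines from the definition of the action, and then run the existence/uniqueness argument above. One small care point worth flagging explicitly: $\gamma$ is a CNF predicate because each $\gamma_i$ is and a conjunction of CNF formulas is CNF, so the conclusion that $\gamma$ is a \emph{predicate} (not merely a circuit) is immediate.
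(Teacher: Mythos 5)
Your proposal is correct and takes essentially the same route as the paper's proof: both rest on the observation that, because the supports $\Omega_i$ are disjoint, the $G$-orbit of an assignment decomposes as the product of the $G_i$-orbits of its restrictions (with the fixed part unchanged), from which existence and uniqueness of a $\gamma$-satisfying representative follow componentwise. The paper states this more tersely, but the underlying argument is identical.
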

\begin{proof}
	Let $F \subseteq \Omega$ be the set of points fixed by $G$. Then $\Omega = \Omega_1 \dcup \dots \dcup \Omega_r \dcup F$.
	An assignment $\theta \colon \Omega \to \{0,1\}$ can be viewed as a tuple $(\theta_1, \dots, \theta_r, \theta_F)$ of assignments defined on $\Omega_1, \ldots, \Omega_r, F$, respectively, and we have $\Orbi{G}{\theta} = \Orbi{G_1}{\theta_1} \times \dots \times \Orbi{G_r}{\theta_r} \times \{\theta_F\}$. Hence $\theta$ satisfies~$\mgG$ if and only if $\theta_i$ satisfies $\mgG_i$ for every $i \in [r]$. Thus, $\mgG$ is a complete symmetry breaking predicate for $G$.
	\end{proof}
The size of the constraint $\mgG$ is linear in the size of the constraints $\mgG_1, \dots, \mgG_r$. 

\xparagraph{Wreath Products.}
We now turn our attention to so-called \emph{wreath products}.
They naturally occur as the automorphism groups of tree-like structures and can be detected as the induced action on a block system \cite{seress_2003}. 
Tree-like appendages are already detected and exploited by practical symmetry detection algorithms \cite{DBLP:conf/wea/AndersSS23}, and thus these wreath products seem readily available.

Indeed, certain cases of wreath products can be efficiently handled in symmetry breaking. 
Specifically, we show that automorphism groups of trees can be taken care of (see Theorem~\ref{thm:tree-intro}). 

Intuitively, a wreath symmetry occurs if the domain can be partitioned into equally-sized parts with identical symmetries that can be permuted among each other. 
The corresponding symmetry group is made of a group describing the possible permutations of the points within a part, and a group describing the permutation of the parts. 
Formally, let $G \leq \Sym(n)$ and $H \leq \Sym(m)$, and consider the imprimitive action of $G \wr H$ on $X \coloneqq \{x_{ij} \colon i \in [n],\, j \in [m]\}$. Explicitly, it is given by 
$x_{ij}^{((g_1, \dots, g_m), h)} = x_{g_{h(j)}(i) \,h(j)}$. 
For $\theta \colon X \to \{0,1\}$ and $j \in [m]$, let $\theta_{j} \coloneqq \theta|_{\{x_{1j}, \ldots, x_{nj}\}}$
and write $\theta = (\theta_1, \dots, \theta_m)$. 

\xparagraph{Wreath Products with CNF.} Let us first focus on CNF predicates. Recall the predicate $\lambda_{m,n}$ from the beginning of this section. The following result shows that a symmetry breaking predicate for a permutation group $G$ can be ``lifted'' to a predicate for $G \wr \Sym(m)$: 

\begin{lemma}\label{prop:wreathproduct} 
	Assume that $\gamma$ is a complete symmetry breaking predicate for $G \leq \Sym(n)$ and set $\gamma_j \coloneqq \gamma(x_{1j}, \dots, x_{nj})$ for all $j \in [m]$. 
	Then $\nu \coloneqq \bigwedge_{j \in [m]} \gamma_j \land \lambda_{m,n}$ is a complete symmetry breaking predicate for $W \coloneqq G \wr \Sym(m)$. 
\end{lemma}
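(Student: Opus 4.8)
The plan is to show that $\nu = \bigwedge_{j \in [m]} \gamma_j \land \lambda_{m,n}$ accepts exactly one assignment per orbit of $W = G \wr \Sym(m)$ acting on $X$. I would work directly with the tuple decomposition $\theta = (\theta_1, \dots, \theta_m)$ of an assignment, where $\theta_j$ is the restriction to the $j$-th column $\{x_{1j}, \dots, x_{nj}\}$, and with the description of the action given in the excerpt: an element $((g_1, \dots, g_m), h)$ sends $\theta$ to the tuple whose $j$-th column is $g_j$ applied to $\theta_{h^{-1}(j)}$ (up to the exact indexing conventions fixed in the excerpt). The key structural observation is that each column $\theta_j$, as a $\{0,1\}$-string of length $n$, gets acted on by some $g \in G$ and the columns get permuted among themselves by $h \in \Sym(m)$; crucially, $h$ permutes columns without mixing their contents, so the \emph{multiset} of $G$-orbits $\multiset{\Orbi{G}{\theta_j} : j \in [m]}$ is a complete $W$-invariant of $\theta$.

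The argument then has two halves. \textbf{Existence:} Given an orbit $O$, pick any $\theta \in O$. For each $j$, let $\theta_j^\ast$ be the unique assignment in $\Orbi{G}{\theta_j}$ satisfying $\gamma$ (which exists and is unique since $\gamma$ is a complete symmetry breaking predicate for $G$); replacing each $\theta_j$ by $\theta_j^\ast$ is achieved by an element of the base group $G^m$, so the resulting $\theta^\ast = (\theta_1^\ast, \dots, \theta_m^\ast)$ lies in $O$ and satisfies every $\gamma_j$. Now sort the columns: since $\Sym(m)$ is the full top group, we may apply a permutation $h$ of the columns so that the sorted tuple $\theta^{\ast\ast}$ has its columns in lexicographically non-decreasing order, i.e.\ satisfies $\lambda_{m,n}$. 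Permuting columns of assignments already satisfying $\gamma$ keeps them satisfying $\gamma$ (each $\gamma_j$ is the same predicate $\gamma$ applied to column $j$, and the columns are merely reordered, each still being a $\gamma$-satisfying string). Hence $\theta^{\ast\ast} \in O$ satisfies $\nu$. \textbf{Uniqueness:} Suppose $\theta, \theta' \in O$ both satisfy $\nu$. Since they satisfy $\bigwedge_j \gamma_j$, every column of each is the unique $\gamma$-representative of its $G$-orbit; since $\theta$ and $\theta'$ lie in one $W$-orbit, they have the same multiset of column $G$-orbits, hence the same multiset of actual columns $\multiset{\theta_j}_j = \multiset{\theta'_j}_j$. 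As both column-tuples are lex-sorted (by $\lambda_{m,n}$), a sorted sequence is determined by its multiset, so $\theta_j = \theta'_j$ for all $j$, i.e.\ $\theta = \theta'$.

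The one genuinely careful point — and the part I expect to be the main obstacle, or at least the most error-prone — is handling the interaction between ties in the lex-ordering of columns and the requirement that the chosen representative be unique: if two columns $\theta_j^\ast = \theta_k^\ast$ are equal, there are multiple top-group elements realizing a sorted order, but they all produce the \emph{same} sorted tuple, so uniqueness is unaffected; I would state this explicitly. A secondary technical nuisance is bookkeeping the exact action convention (whether $h$ or $h^{-1}$ acts on indices, and the order of composition) so that ``applying a base-group element fixes each column's position while canonizing its content'' and ``applying a top-group element permutes columns without altering content'' are both literally correct under the definition $x_{ij}^{((g_1,\dots,g_m),h)} = x_{g_{h(j)}(i)\,h(j)}$ given just before the lemma. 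Once those conventions are pinned down, the proof is the two-paragraph existence/uniqueness argument above, and the size bound ($|\nu|$ linear in $m \cdot |\gamma| + |\lambda_{m,n}|$, hence polynomial) is immediate.
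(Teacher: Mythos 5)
Your proof is correct and follows essentially the same strategy as the paper's: canonize each column by a base-group element, then sort columns by a top-group permutation for existence; for uniqueness, observe that columns satisfying $\gamma$ must be the unique $G$-orbit representatives, and the sorted order is then forced. Your explicit framing via the multiset of column $G$-orbits is a slightly different presentation of the paper's direct manipulation of the group element $((g_1,\dots,g_m),h)$, but the underlying argument is the same.
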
 
\begin{proof}
	Let $\theta \colon X \to \{0,1\}$ be an assignment. For every $j \in [m]$, there exists $g_j \in G$ such that $\theta_{j}^{g_j}$ satisfies $\gamma$. Write $\theta' \coloneqq \theta^{((g_1, \dots, g_m),1)}$. There exists $h \in \Sym(m)$ with $\theta'_{h^{-1}(1)} \preceq_{\text{lex}} \dots \preceq_{\text{lex}} \theta'_{h^{-1}(m)}$.  Hence, the assignment $\theta'^{(1,h)} \in \Orbi{W}{\theta}$ satisfies~$\nu$. 
	
	On the other hand, consider assignments $\theta, \theta' \colon X \to \{0,1\}$ satisfying~$\nu$, and assume $\theta' = \theta^{((g_1, \dots, g_m), h)}$ for $\bigl((g_1, \dots, g_m), h\bigr) \in W$. For all $j \in [m]$, this implies $\theta'_{j} =  \theta_{h^{-1}(j)}^{g_{h^{-1}(j)}}\in \Orbi{G}{\theta_{h^{-1}(j)}}$. As $\theta'_j$ and $\theta_{h^{-1}(j)}$ satisfy $\gamma$, they coincide, so we may choose $g_1 = \dots = g_m = 1$. Since $\theta$ and $\theta'$ satisfy $\lambda_{m,n}$, we have $\theta_1 \preceq_{\text{lex}} \dots \preceq_{\text{lex}} \theta_m$ and $\theta_1' \preceq_{\text{lex}} \dots \preceq_{\text{lex}} \theta_m'$. This yields $\theta_j' = \theta_j$ for all $j \in [m]$, so $\theta = \theta'$ follows.
\end{proof}

\begin{remark}\label{rem:sizepredicate}
	The size of the predicate $\nu$ given in Lemma~\ref{prop:wreathproduct} is in $\mathcal{O}(s(\mgG)  m + nm)$, where $s(\mgG)$ denotes the size of $\mgG$. Note that if $s(\mgG) \in \mathcal{O}(n)$ holds, then the size of $\nu$ is linear in the domain size $nm$ of the wreath product. 
\end{remark}

\begin{corollary}\label{cor:wrsymmetricgroups}
	The predicate $\nu = \bigwedge_{j \in[m]} (x_{1j} \leq \dots \leq x_{nj}) \land \lambda_{m,n}$
	is a complete symmetry breaking predicate for $\Sym(n) \wr \Sym(m)$. 
\end{corollary}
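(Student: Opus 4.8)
The plan is to obtain this as an immediate specialization of Lemma~\ref{prop:wreathproduct}. First I would record the (folklore) fact, already stated in the paragraph on symmetric groups, that $\psi_n \coloneqq (x_1 \leq x_2 \leq \dots \leq x_n)$ is a \emph{complete} symmetry breaking predicate for $\Sym(n)$ in its natural action on $n$ points. To justify this, note that two assignments $\theta, \theta' \colon [n] \to \{0,1\}$ lie in the same $\Sym(n)$-orbit exactly when they have the same number of ones, say $k$, and that among all assignments with exactly $k$ ones precisely one, namely $0^{n-k}1^{k}$, satisfies the monotone chain $x_1 \leq \dots \leq x_n$. Hence each orbit of $\theta_{\text{full}}([n])$ contains exactly one satisfying assignment, which is what completeness requires.

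Next I would instantiate Lemma~\ref{prop:wreathproduct} with $G = \Sym(n)$ and $\gamma = \psi_n$. By definition $\gamma_j = \gamma(x_{1j},\dots,x_{nj}) = (x_{1j} \leq \dots \leq x_{nj})$, so the predicate $\nu = \bigwedge_{j \in [m]} \gamma_j \wedge \lambda_{m,n}$ produced by the lemma is verbatim the predicate in the statement of the corollary. Since $W = G \wr \Sym(m) = \Sym(n) \wr \Sym(m)$, Lemma~\ref{prop:wreathproduct} then yields directly that $\nu$ is a complete symmetry breaking predicate for $\Sym(n) \wr \Sym(m)$.

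There is essentially no obstacle beyond checking that the two ingredients line up: one must make sure that $\psi_n$ is complete in the strong sense (exactly one satisfying assignment per orbit, not merely at least one), which is the short orbit-counting argument above, and that the imprimitive action of $\Sym(n) \wr \Sym(m)$ assumed in Lemma~\ref{prop:wreathproduct} is the one intended in the corollary — which it is, since both are given by $x_{ij}^{((g_1,\dots,g_m),h)} = x_{g_{h(j)}(i)\,h(j)}$.
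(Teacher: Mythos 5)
Your proposal is correct and matches the paper's intent exactly: the corollary is presented as an immediate specialization of Lemma~\ref{prop:wreathproduct} with $G = \Sym(n)$ and $\gamma = \psi_n$, and your short orbit-counting argument that $\psi_n$ is a complete symmetry breaking predicate for $\Sym(n)$ supplies the only ingredient the paper leaves implicit.
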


Combining the results for direct disjoint decompositions and wreath products, it readily follows that automorphism groups of trees can be handled efficiently (see Theorem~\ref{thm:tree-intro}). 

\begin{proof}[Proof of Theorem~\ref{thm:tree-intro}]
The group $G$ can be constructed by iterated disjoint direct decompositions and wreath products in which the top group is a full symmetric group \cite{10.1007/BF02546665}. Combining Lemma~\ref{lemma:disjointdecomposition} and Proposition~\ref{prop:wreathproduct} thus yields a symmetry breaking predicate for~$G$. Inductively, it follows from Remark~\ref{rem:sizepredicate} that the size of this predicate is linear.
\end{proof}

\xparagraph{Wreath Products with Circuits.}
Using circuits and a few further assumptions, we treat general wreath products $W \coloneqq G \wr H$.
\wreath*
\begin{proof}
	Since we may turn a polynomial time algorithm into a polynomial-sized circuit, it suffices to give a polynomial-time algorithm for symmetry breaking for~$W$. 
	
	Let $\psi_G$ denote the symmetry breaking circuit for $G$, and for any partition stabilizer~$S$ in $H$, write $\psi_S$ for the corresponding symmetry breaking circuit. For an assignment $\theta \colon X \to \{0,1\}$, write $ \theta = (\theta_1, \dots, \theta_m)$ as before. For $i \in [n]$, let $c_i(\theta)$ be the string of length~$m$ consisting of the $i$-th entries of $\theta_1, \dots, \theta_m$. We define partitions $P_1, \dots, P_n$ of $[m]$ as follows: let $P_1$ denote the partition into the index sets of zero and one entries in $c_1(\theta)$. For $i \geq 2$, $P_i$ is the refinement of $P_{i-1}$ according to the zero-one-partition of $c_i(\theta)$. For $i \in [n]$, let $S_i$ denote the partition stabilizer of $P_i$ in $H$, and set $S_0 \coloneqq H$.
	
	\textit{(Description of Algorithm.)} Given an assignment $\theta = (\theta_1, \dots, \theta_m)$, we define our algorithm as follows: 
	\begin{enumerate}
	\item If $\psi_G[\theta_i]$ is non-satisfying for some $i \in [m]$, return false. 
	\item For $i \in [n]$, compute the vectors $c_i(\theta)$ as well as the partitions $P_i$ and their stabilizers~$S_i$. 
	\item For $i \in [n]$, check if $\psi_{S_{i-1}}[c_i(\theta)]$ is satisfying. If this fails for some $i \in [n]$, return false. Otherwise, return true. 
	\end{enumerate}
	
	\textit{(Correctness of Algorithm.)} 
	By assumption, partition stabilizers in $H$ as well as all the necessary symmetry breaking circuits can be computed in polynomial time. 
	The remaining steps of the algorithm can clearly be computed in polynomial time.
	
	Replacing the input assignment $\theta = (\theta_1, \dots, \theta_m)$ by some element $\theta^{((g_1, \dots, g_m), 1)} \in \Orbi{W}{\theta}$, we may assume that $\psi_G[\theta_1], \dots, \psi_G[\theta_m]$ are satisfying. By assumption, there exists $h_1 \in H$ such that $\psi_H[c_1\bigl(\theta^{(1,h_1)}\bigr)]$ is satisfying. Moreover, there exists $h_2 \in S_1$ such that $\psi_{S_1}[c_2\bigl( \theta^{(1,h_2h_1)}\bigr)]$ is satisfying. Note that $c_1\bigl(\theta^{(1,h_2 h_1)}\bigr) = c_1\bigl( \theta^{(1,h_1)}\bigr)$ holds due to $h_2 \in S_1$. Continuing, we obtain an element $\theta' \coloneqq \theta^{(1,h_{n-1} \cdots h_1)}  \in \Orbi{W}{\theta}$ for which the algorithm returns true. 
	
	On the other hand, suppose that $\theta = (\theta_1, \dots, \theta_m)$ and $\theta' = (\theta_1', \dots, \theta_m')$ are assignments in the same $W$-orbit accepted by the algorithm.
	Then $\psi_G[\theta_i]$ and $\psi_G[\theta_i']$ are satisfying for all $i \in [m]$. Since $\theta$ and $\theta'$ lie in the same $W$-orbit, the strings $\theta_1, \dots, \theta_m$ and $\theta_1', \dots, \theta_m'$ coincide up to reordering. 
	The ordering of the substrings is lexicographic with respect to a successive application of~$H$. This yields $\theta = \theta'$. 
\end{proof}
In general, it is unknown whether partition stabilizers can be efficiently computed (see~\cite{holt2005handbook}). 
However, for $H = \Sym(m)$, the stabilizer of $P = (\Omega_1, \dots, \Omega_r)$ is simply given by $\Sym(\Omega_1) \times \dots \times \Sym(\Omega_r) \leq \Sym(n)$, and hence readily computable. 
This also holds if the order of $H$ is small. 
There, we obtain the following consequence of the preceding result:

\begin{corollary}
	Let $G \leq \Sym(n)$ and $H \leq \Sym(m)$ be permutation groups. Assume that a complete symmetry breaking circuit for $G$ can be computed in polynomial time and that $|H| \in \poly(n,m)$ holds. Then a complete symmetry breaking circuit
	for $G \wr H$ can be computed in polynomial time.
\end{corollary}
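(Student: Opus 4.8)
The plan is to reduce the final corollary directly to Theorem~\ref{thm:wreathproducts} by verifying its two hypotheses for the case $H \leq \Sym(m)$ with $|H| \in \poly(n,m)$. The first hypothesis---that a complete symmetry breaking circuit for $G$ can be computed in polynomial time---is assumed outright. So the only work is to show that for \emph{every} partition $P$ of $[m]$, both the partition stabilizer $S$ of $P$ in $H$ and a complete symmetry breaking circuit for $S$ are computable in polynomial time.

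First I would argue that the partition stabilizer is easy to compute when $|H|$ is small: since $|S| \leq |H| \in \poly(n,m)$, we may simply enumerate all elements of $H$ (assuming $H$ is given by a generating set, compute the full list of group elements in time $\poly(|H|)$ via a standard orbit/closure computation, see~\cite{holt2005handbook}), and for each $h \in H$ test in polynomial time whether it setwise stabilizes every block of $P$. This yields the list of elements of $S$ and in particular a generating set, all in polynomial time.

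Second I would invoke the ``groups of polynomial order'' case recalled at the start of Section~\ref{sec:upperbounds}: since $|S| \leq |H| \in \poly(n,m)$, one can explicitly write down a complete symmetry breaking constraint of size polynomial in $|S|$ (and hence in $n,m$) that breaks each group element individually, following~\cite{DBLP:conf/kr/CrawfordGLR96}. This is a polynomial-time computation once the element list of $S$ is in hand. Thus both hypotheses of Theorem~\ref{thm:wreathproducts} are met, and applying it gives a polynomial-time algorithm producing a complete symmetry breaking circuit for $W = G \wr H$, which is what we want. (I would also remark, as the surrounding text already does, that the special case $H = \Sym(m)$ works for a different reason: there the stabilizer of $P = (\Omega_1, \dots, \Omega_r)$ is literally $\Sym(\Omega_1) \times \dots \times \Sym(\Omega_r)$, which is trivially computable and handled by Corollary~\ref{cor:wrsymmetricgroups} together with Lemma~\ref{lemma:disjointdecomposition}.)

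There is essentially no hard part here: the corollary is a straightforward specialization of Theorem~\ref{thm:wreathproducts}. The only point requiring a moment's care is confirming that ``small order'' propagates correctly---$|S| \leq |H|$ because $S \leq H$, and the partition-stabilizer computation must be polynomial for \emph{every} partition $P$ (uniformly), which it is since the per-element stabilization test is polynomial and there are only $|H|$ elements to check regardless of $P$. With that observation the proof is immediate.
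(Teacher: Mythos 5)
Your proof is correct and matches the approach the paper intends: the paper states no explicit proof, but the text immediately preceding the corollary makes exactly your argument (partition stabilizers are computable by enumeration when $|H|$ is small, and the ``groups of polynomial order'' case from the start of Section~\ref{sec:upperbounds} supplies the needed SBP for each stabilizer $S \leq H$), so the corollary reduces directly to Theorem~\ref{thm:wreathproducts}. One small caveat on your parenthetical aside: $H = \Sym(m)$ is not in fact an instance of this corollary, since $|\Sym(m)| = m!$ is generally not polynomial in $n,m$; the paper lists it as a separate situation where partition stabilizers happen to be computable, not as a special case of the polynomial-order hypothesis.
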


\section{Conclusion and Future Work}

We laid the foundation for a systematic study of the complexity of symmetry breaking.
A central tool in our investigation was the relation to the string canonization problem (see Section~\ref{sec:gridhardness}).
In particular, we showed that polynomial time symmetry breaking for the row-column symmetry group, even with circuits and additional variables, implies $\GI{} \in \coNP$ (see Theorem~\ref{thm:core}). 
The same applies to proper Johnson actions (see Theorem~\ref{thm:johnson}). 
On the other hand, we showed that symmetry breaking in polynomial time is possible for several families of groups, including certain classes of wreath products (see Section~\ref{sec:upperbounds}). 

Clearly, the dividing line between permutation groups that are ``hard'' instances for symmetry breaking, and those which admit efficient symmetry breaking, needs to be further explored. 
Among others, the following questions immediately arise: 
\begin{enumerate}
\item Given a permutation group $G$ and a subgroup $H$ of polynomial index, does~$H$ admit efficient symmetry breaking if $G$ does (i.e., does the converse of Lemma~\ref{lemma:index} hold)?
\item How difficult are permutation groups of intermediate size, in particular groups of quasi-polynomial order in the size of the domain?
\end{enumerate}

A positive answer to Question~1 would not only contribute to further decomposition results, but it is particularly relevant as large primitive permutation groups are known to contain normal subgroups of small index exhibiting a Johnson action.

\section*{Funding}
The research leading to these results has received funding from the European Research Council (ERC) under the European Union's Horizon 2020 research and innovation programme
(EngageS: grant agreement No. 820148). Sofia Brenner additionally received funding from the German Research Foundation DFG
(SFB-TRR 195 “Symbolic Tools in Mathematics and their Application”).

\small
\bibliography{arxiv}

\begin{thebibliography}{10}

\bibitem{DBLP:conf/dac/AloulMS03}
Fadi~A. Aloul, Igor~L. Markov, and Karem~A. Sakallah.
\newblock Shatter: efficient symmetry-breaking for boolean satisfiability.
\newblock In {\em Proceedings of the 40th Design Automation Conference, {DAC}
  2003, Anaheim, CA, USA, June 2-6, 2003}, pages 836--839. {ACM}, 2003.
\newblock \href {https://doi.org/10.1145/775832.776042}
  {\path{doi:10.1145/775832.776042}}.

\bibitem{DBLP:conf/esa/AndersS21}
Markus Anders and Pascal Schweitzer.
\newblock Parallel computation of combinatorial symmetries.
\newblock In {\em 29th Annual European Symposium on Algorithms, {ESA} 2021,
  September 6-8, 2021, Lisbon, Portugal (Virtual Conference)}, volume 204 of
  {\em LIPIcs}, pages 6:1--6:18. Schloss Dagstuhl - Leibniz-Zentrum f{\"{u}}r
  Informatik, 2021.
\newblock \href {https://doi.org/10.4230/LIPIcs.ESA.2021.6}
  {\path{doi:10.4230/LIPIcs.ESA.2021.6}}.

\bibitem{DBLP:conf/sat/AndersSS23}
Markus Anders, Pascal Schweitzer, and Mate Soos.
\newblock Algorithms transcending the sat-symmetry interface.
\newblock In {\em 26th International Conference on Theory and Applications of
  Satisfiability Testing, {SAT} 2023, July 4-8, 2023, Alghero, Italy}, volume
  271 of {\em LIPIcs}, pages 1:1--1:21. Schloss Dagstuhl - Leibniz-Zentrum
  f{\"{u}}r Informatik, 2023.
\newblock URL: \url{https://doi.org/10.4230/LIPIcs.SAT.2023.1}, \href
  {https://doi.org/10.4230/LIPICS.SAT.2023.1}
  {\path{doi:10.4230/LIPICS.SAT.2023.1}}.

\bibitem{DBLP:conf/wea/AndersSS23}
Markus Anders, Pascal Schweitzer, and Julian Stie{\ss}.
\newblock Engineering a preprocessor for symmetry detection.
\newblock In {\em 21st International Symposium on Experimental Algorithms,
  {SEA} 2023, July 24-26, 2023, Barcelona, Spain}, volume 265 of {\em LIPIcs},
  pages 1:1--1:21. Schloss Dagstuhl - Leibniz-Zentrum f{\"{u}}r Informatik,
  2023.
\newblock URL: \url{https://doi.org/10.4230/LIPIcs.SEA.2023.1}, \href
  {https://doi.org/10.4230/LIPICS.SEA.2023.1}
  {\path{doi:10.4230/LIPICS.SEA.2023.1}}.

\bibitem{DBLP:conf/stoc/Babai16}
L{\'{a}}szl{\'{o}} Babai.
\newblock Graph isomorphism in quasipolynomial time [extended abstract].
\newblock In {\em Proceedings of the 48th Annual {ACM} {SIGACT} Symposium on
  Theory of Computing, {STOC} 2016, Cambridge, MA, USA, June 18-21, 2016},
  pages 684--697. {ACM}, 2016.
\newblock \href {https://doi.org/10.1145/2897518.2897542}
  {\path{doi:10.1145/2897518.2897542}}.

\bibitem{DBLP:conf/stoc/Babai19}
L{\'{a}}szl{\'{o}} Babai.
\newblock Canonical form for graphs in quasipolynomial time: preliminary
  report.
\newblock In {\em Proceedings of the 51st Annual {ACM} {SIGACT} Symposium on
  Theory of Computing, {STOC} 2019, Phoenix, AZ, USA, June 23-26, 2019}, pages
  1237--1246. {ACM}, 2019.
\newblock \href {https://doi.org/10.1145/3313276.3316356}
  {\path{doi:10.1145/3313276.3316356}}.

\bibitem{DBLP:conf/stoc/BabaiL83}
L{\'{a}}szl{\'{o}} Babai and Eugene~M. Luks.
\newblock Canonical labeling of graphs.
\newblock In {\em Proceedings of the 15th Annual {ACM} Symposium on Theory of
  Computing, 25-27 April, 1983, Boston, Massachusetts, {USA}}, pages 171--183.
  {ACM}, 1983.
\newblock \href {https://doi.org/10.1145/800061.808746}
  {\path{doi:10.1145/800061.808746}}.

\bibitem{BabaiL83}
L\'{a}szl\'{o} Babai and Eugene~M. Luks.
\newblock Canonical labeling of graphs.
\newblock In {\em Proceedings of the Fifteenth Annual ACM Symposium on Theory
  of Computing}, STOC '83, page 171–183, New York, NY, USA, 1983. Association
  for Computing Machinery.
\newblock \href {https://doi.org/10.1145/800061.808746}
  {\path{doi:10.1145/800061.808746}}.

\bibitem{permNC}
L{\'{a}}szl{\'{o}} Babai, Eugene~M. Luks, and {\'{A}}kos Seress.
\newblock Permutation groups in {NC}.
\newblock In {\em Proceedings of the 19th Annual {ACM} Symposium on Theory of
  Computing, 1987, New York, New York, {USA}}, pages 409--420. {ACM}, 1987.
\newblock \href {https://doi.org/10.1145/28395.28439}
  {\path{doi:10.1145/28395.28439}}.

\bibitem{BabaiM88}
László Babai and Shlomo Moran.
\newblock Arthur-merlin games: A randomized proof system, and a hierarchy of
  complexity classes.
\newblock {\em Journal of Computer and System Sciences}, 36(2):254--276, 1988.
\newblock URL:
  \url{https://www.sciencedirect.com/science/article/pii/0022000088900281},
  \href {https://doi.org/https://doi.org/10.1016/0022-0000(88)90028-1}
  {\path{doi:https://doi.org/10.1016/0022-0000(88)90028-1}}.

\bibitem{DBLP:journals/jair/BogaertsGMN23}
Bart Bogaerts, Stephan Gocht, Ciaran McCreesh, and Jakob Nordstr{\"{o}}m.
\newblock Certified dominance and symmetry breaking for combinatorial
  optimisation.
\newblock {\em J. Artif. Intell. Res.}, 77:1539--1589, 2023.
\newblock URL: \url{https://doi.org/10.1613/jair.1.14296}, \href
  {https://doi.org/10.1613/JAIR.1.14296} {\path{doi:10.1613/JAIR.1.14296}}.

\bibitem{breakidkissatcomp}
Bart Bogaerts, Jakob Nordström, Andy Oertel, and Çağrı
  Uluç~Yıldırımoğlu.
\newblock {BreakID-kissat} in {SAT} competition 2023 (system description).
\newblock In {\em Proceedings of SAT Competition 2023: Solver, Benchmark and
  Proof Checker Descriptions}, Department of Computer Science Series of
  Publications B, Finland, 2023. Department of Computer Science, University of
  Helsinki.

\bibitem{DBLP:journals/jsc/ChangJ22}
Mun~See Chang and Christopher Jefferson.
\newblock Disjoint direct product decompositions of permutation groups.
\newblock {\em J. Symb. Comput.}, 108:1--16, 2022.
\newblock \href {https://doi.org/10.1016/j.jsc.2021.04.003}
  {\path{doi:10.1016/j.jsc.2021.04.003}}.

\bibitem{DBLP:conf/cp/CodishGIS16}
Michael Codish, Graeme Gange, Avraham Itzhakov, and Peter~J. Stuckey.
\newblock Breaking symmetries in graphs: The nauty way.
\newblock In {\em Principles and Practice of Constraint Programming - 22nd
  International Conference, {CP} 2016, Toulouse, France, September 5-9, 2016,
  Proceedings}, volume 9892 of {\em Lecture Notes in Computer Science}, pages
  157--172. Springer, 2016.
\newblock \href {https://doi.org/10.1007/978-3-319-44953-1\_11}
  {\path{doi:10.1007/978-3-319-44953-1\_11}}.

\bibitem{DBLP:conf/kr/CrawfordGLR96}
James~M. Crawford, Matthew~L. Ginsberg, Eugene~M. Luks, and Amitabha Roy.
\newblock Symmetry-breaking predicates for search problems.
\newblock In {\em Proceedings of the Fifth International Conference on
  Principles of Knowledge Representation and Reasoning (KR'96), Cambridge,
  Massachusetts, USA, November 5-8, 1996}, pages 148--159. Morgan Kaufmann,
  1996.

\bibitem{DBLP:conf/dac/DargaLSM04}
Paul~T. Darga, Mark~H. Liffiton, Karem~A. Sakallah, and Igor~L. Markov.
\newblock Exploiting structure in symmetry detection for {CNF}.
\newblock In {\em Proceedings of the 41th Design Automation Conference, {DAC}
  2004, San Diego, CA, USA, June 7-11, 2004}, pages 530--534. {ACM}, 2004.
\newblock \href {https://doi.org/10.1145/996566.996712}
  {\path{doi:10.1145/996566.996712}}.

\bibitem{DBLP:conf/sat/Devriendt0B17}
Jo~Devriendt, Bart Bogaerts, and Maurice Bruynooghe.
\newblock Symmetric explanation learning: Effective dynamic symmetry handling
  for {SAT}.
\newblock In {\em Theory and Applications of Satisfiability Testing - {SAT}
  2017 - 20th International Conference, Melbourne, VIC, Australia, August 28 -
  September 1, 2017, Proceedings}, volume 10491 of {\em Lecture Notes in
  Computer Science}, pages 83--100. Springer, 2017.
\newblock \href {https://doi.org/10.1007/978-3-319-66263-3\_6}
  {\path{doi:10.1007/978-3-319-66263-3\_6}}.

\bibitem{DBLP:conf/sat/Devriendt0BD16}
Jo~Devriendt, Bart Bogaerts, Maurice Bruynooghe, and Marc Denecker.
\newblock Improved static symmetry breaking for {SAT}.
\newblock In {\em Theory and Applications of Satisfiability Testing - {SAT}
  2016 - 19th International Conference, Bordeaux, France, July 5-8, 2016,
  Proceedings}, volume 9710 of {\em Lecture Notes in Computer Science}, pages
  104--122. Springer, 2016.
\newblock \href {https://doi.org/10.1007/978-3-319-40970-2\_8}
  {\path{doi:10.1007/978-3-319-40970-2\_8}}.

\bibitem{DBLP:conf/ictai/DevriendtBCDM12}
Jo~Devriendt, Bart Bogaerts, Broes~De Cat, Marc Denecker, and Christopher
  Mears.
\newblock Symmetry propagation: Improved dynamic symmetry breaking in {SAT}.
\newblock In {\em {IEEE} 24th International Conference on Tools with Artificial
  Intelligence, {ICTAI} 2012, Athens, Greece, November 7-9, 2012}, pages
  49--56. {IEEE} Computer Society, 2012.
\newblock \href {https://doi.org/10.1109/ICTAI.2012.16}
  {\path{doi:10.1109/ICTAI.2012.16}}.

\bibitem{DIX96}
John~D. Dixon and Brian Mortimer.
\newblock {\em Permutation Groups}.
\newblock Graduate Texts in Mathematics. Springer New York, 1996.
\newblock URL: \url{https://books.google.de/books?id=4QDpFN6k61EC}.

\bibitem{DBLP:conf/cp/FlenerFHKMPW02}
Pierre Flener, Alan~M. Frisch, Brahim Hnich, Zeynep Kiziltan, Ian Miguel,
  Justin Pearson, and Toby Walsh.
\newblock Breaking row and column symmetries in matrix models.
\newblock In {\em Principles and Practice of Constraint Programming - {CP}
  2002, 8th International Conference, {CP} 2002, Ithaca, NY, USA, September
  9-13, 2002, Proceedings}, volume 2470 of {\em Lecture Notes in Computer
  Science}, pages 462--476. Springer, 2002.
\newblock \href {https://doi.org/10.1007/3-540-46135-3\_31}
  {\path{doi:10.1007/3-540-46135-3\_31}}.

\bibitem{matrixmodelling}
Pierre Flener, Alan~M. Frisch, Brahim Hnich, Zeynep Kızıltan, Ian Miguel, and
  Toby Walsh.
\newblock Matrix modelling.
\newblock Technical Report APES-36-2001, APES group (2001), 2001.

\bibitem{DBLP:journals/amai/FlenerPS09}
Pierre Flener, Justin Pearson, and Meinolf Sellmann.
\newblock Static and dynamic structural symmetry breaking.
\newblock {\em Ann. Math. Artif. Intell.}, 57(1):37--57, 2009.
\newblock URL: \url{https://doi.org/10.1007/s10472-009-9172-3}, \href
  {https://doi.org/10.1007/S10472-009-9172-3}
  {\path{doi:10.1007/S10472-009-9172-3}}.

\bibitem{DBLP:reference/fai/GentPP06}
Ian~P. Gent, Karen~E. Petrie, and Jean{-}Fran{\c{c}}ois Puget.
\newblock Symmetry in constraint programming.
\newblock In {\em Handbook of Constraint Programming}, volume~2 of {\em
  Foundations of Artificial Intelligence}, pages 329--376. Elsevier, 2006.
\newblock \href {https://doi.org/10.1016/S1574-6526(06)80014-3}
  {\path{doi:10.1016/S1574-6526(06)80014-3}}.

\bibitem{Grayland2009}
Andrew Grayland, Chris Jefferson, Ian Miguel, and Colva~M. Roney-Dougal.
\newblock Minimal ordering constraints for some families of variable
  symmetries.
\newblock {\em Annals of Mathematics and Artificial Intelligence}, 57:75--102,
  2009.

\bibitem{DBLP:journals/mics/Heule19}
Marijn J.~H. Heule.
\newblock Optimal symmetry breaking for graph problems.
\newblock {\em Math. Comput. Sci.}, 13(4):533--548, 2019.
\newblock URL: \url{https://doi.org/10.1007/s11786-019-00397-5}, \href
  {https://doi.org/10.1007/S11786-019-00397-5}
  {\path{doi:10.1007/S11786-019-00397-5}}.

\bibitem{holt2005handbook}
D.F. Holt, B.~Eick, and E.A. O'Brien.
\newblock {\em Handbook of Computational Group Theory}.
\newblock Discrete Mathematics and Its Applications. CRC Press, 2005.
\newblock URL: \url{https://books.google.de/books?id=i2UjAASZ33YC}.

\bibitem{DBLP:journals/jsc/JunttilaKKK20}
Tommi~A. Junttila, Matti Karppa, Petteri Kaski, and Jukka Kohonen.
\newblock An adaptive prefix-assignment technique for symmetry reduction.
\newblock {\em J. Symb. Comput.}, 99:21--49, 2020.
\newblock URL: \url{https://doi.org/10.1016/j.jsc.2019.03.002}, \href
  {https://doi.org/10.1016/J.JSC.2019.03.002}
  {\path{doi:10.1016/J.JSC.2019.03.002}}.

\bibitem{DBLP:conf/tapas/JunttilaK11}
Tommi~A. Junttila and Petteri Kaski.
\newblock Conflict propagation and component recursion for canonical labeling.
\newblock In {\em Theory and Practice of Algorithms in (Computer) Systems -
  First International {ICST} Conference, {TAPAS} 2011, Rome, Italy, April
  18-20, 2011. Proceedings}, volume 6595 of {\em Lecture Notes in Computer
  Science}, pages 151--162. Springer, 2011.
\newblock \href {https://doi.org/10.1007/978-3-642-19754-3\_16}
  {\path{doi:10.1007/978-3-642-19754-3\_16}}.

\bibitem{DBLP:conf/cp/KatsirelosNW10}
George Katsirelos, Nina Narodytska, and Toby Walsh.
\newblock On the complexity and completeness of static constraints for breaking
  row and column symmetry.
\newblock In {\em Principles and Practice of Constraint Programming - {CP} 2010
  - 16th International Conference, {CP} 2010, St. Andrews, Scotland, UK,
  September 6-10, 2010. Proceedings}, volume 6308 of {\em Lecture Notes in
  Computer Science}, pages 305--320. Springer, 2010.
\newblock \href {https://doi.org/10.1007/978-3-642-15396-9\_26}
  {\path{doi:10.1007/978-3-642-15396-9\_26}}.

\bibitem{DBLP:conf/cp/KirchwegerS21}
Markus Kirchweger and Stefan Szeider.
\newblock {SAT} modulo symmetries for graph generation.
\newblock In {\em 27th International Conference on Principles and Practice of
  Constraint Programming, {CP}}, volume 210 of {\em LIPIcs}, pages 34:1--34:16.
  Schloss Dagstuhl - Leibniz-Zentrum f{\"{u}}r Informatik, 2021.
\newblock \href {https://doi.org/10.4230/LIPIcs.CP.2021.34}
  {\path{doi:10.4230/LIPIcs.CP.2021.34}}.

\bibitem{KoeblerST93}
Johannes K{\"{o}}bler, Uwe Sch{\"{o}}ning, and Jacobo Tor{\'{a}}n.
\newblock {\em The Graph Isomorphism Problem: Its Structural Complexity}.
\newblock Progress in Theoretical Computer Science. Birkh{\"{a}}user/Springer,
  1993.
\newblock \href {https://doi.org/10.1007/978-1-4612-0333-9}
  {\path{doi:10.1007/978-1-4612-0333-9}}.

\bibitem{Luks82}
Eugene~M. Luks.
\newblock Isomorphism of graphs of bounded valence can be tested in polynomial
  time.
\newblock {\em Journal of Computer and System Sciences}, 25(1):42--65, 1982.
\newblock URL:
  \url{https://www.sciencedirect.com/science/article/pii/0022000082900095},
  \href {https://doi.org/https://doi.org/10.1016/0022-0000(82)90009-5}
  {\path{doi:https://doi.org/10.1016/0022-0000(82)90009-5}}.

\bibitem{DBLP:conf/stoc/Luks99}
Eugene~M. Luks.
\newblock Hypergraph isomorphism and structural equivalence of boolean
  functions.
\newblock In Jeffrey~Scott Vitter, Lawrence~L. Larmore, and Frank~Thomson
  Leighton, editors, {\em Proceedings of the Thirty-First Annual {ACM}
  Symposium on Theory of Computing, May 1-4, 1999, Atlanta, Georgia, {USA}},
  pages 652--658. {ACM}, 1999.
\newblock \href {https://doi.org/10.1145/301250.301427}
  {\path{doi:10.1145/301250.301427}}.

\bibitem{DBLP:journals/amai/LuksR04}
Eugene~M. Luks and Amitabha Roy.
\newblock The complexity of symmetry-breaking formulas.
\newblock {\em Ann. Math. Artif. Intell.}, 41(1):19--45, 2004.
\newblock \href {https://doi.org/10.1023/B:AMAI.0000018578.92398.10}
  {\path{doi:10.1023/B:AMAI.0000018578.92398.10}}.

\bibitem{DBLP:journals/mp/Margot02}
Fran{\c{c}}ois Margot.
\newblock Pruning by isomorphism in branch-and-cut.
\newblock {\em Math. Program.}, 94(1):71--90, 2002.
\newblock URL: \url{https://doi.org/10.1007/s10107-002-0358-2}, \href
  {https://doi.org/10.1007/S10107-002-0358-2}
  {\path{doi:10.1007/S10107-002-0358-2}}.

\bibitem{DBLP:journals/jal/McKay98}
Brendan~D. McKay.
\newblock Isomorph-free exhaustive generation.
\newblock {\em J. Algorithms}, 26(2):306--324, 1998.
\newblock URL: \url{https://doi.org/10.1006/jagm.1997.0898}, \href
  {https://doi.org/10.1006/JAGM.1997.0898} {\path{doi:10.1006/JAGM.1997.0898}}.

\bibitem{DBLP:journals/jsc/McKayP14}
Brendan~D. McKay and Adolfo Piperno.
\newblock Practical graph isomorphism, {II}.
\newblock {\em J. Symb. Comput.}, 60:94--112, 2014.
\newblock \href {https://doi.org/10.1016/j.jsc.2013.09.003}
  {\path{doi:10.1016/j.jsc.2013.09.003}}.

\bibitem{DBLP:conf/tacas/MetinBCK18}
Hakan Metin, Souheib Baarir, Maximilien Colange, and Fabrice Kordon.
\newblock Cdclsym: Introducing effective symmetry breaking in {SAT} solving.
\newblock In {\em Tools and Algorithms for the Construction and Analysis of
  Systems - 24th International Conference, {TACAS} 2018, Held as Part of the
  European Joint Conferences on Theory and Practice of Software, {ETAPS} 2018,
  Thessaloniki, Greece, April 14-20, 2018, Proceedings, Part {I}}, volume 10805
  of {\em Lecture Notes in Computer Science}, pages 99--114. Springer, 2018.
\newblock \href {https://doi.org/10.1007/978-3-319-89960-2\_6}
  {\path{doi:10.1007/978-3-319-89960-2\_6}}.

\bibitem{DBLP:conf/ipco/OstrowskiLRS08}
James Ostrowski, Jeff~T. Linderoth, Fabrizio Rossi, and Stefano Smriglio.
\newblock Constraint orbital branching.
\newblock In {\em Integer Programming and Combinatorial Optimization, 13th
  International Conference, {IPCO} 2008, Bertinoro, Italy, May 26-28, 2008,
  Proceedings}, volume 5035 of {\em Lecture Notes in Computer Science}, pages
  225--239. Springer, 2008.
\newblock \href {https://doi.org/10.1007/978-3-540-68891-4\_16}
  {\path{doi:10.1007/978-3-540-68891-4\_16}}.

\bibitem{DBLP:journals/mpc/PfetschR19}
Marc~E. Pfetsch and Thomas Rehn.
\newblock A computational comparison of symmetry handling methods for mixed
  integer programs.
\newblock {\em Math. Program. Comput.}, 11(1):37--93, 2019.
\newblock \href {https://doi.org/10.1007/s12532-018-0140-y}
  {\path{doi:10.1007/s12532-018-0140-y}}.

\bibitem{10.1007/BF02546665}
G.~P{\'o}lya.
\newblock {Kombinatorische Anzahlbestimmungen für Gruppen, Graphen und
  chemische Verbindungen}.
\newblock {\em Acta Mathematica}, 68(none):145 -- 254, 1937.
\newblock \href {https://doi.org/10.1007/BF02546665}
  {\path{doi:10.1007/BF02546665}}.

\bibitem{DBLP:conf/aaai/Regin96}
Jean{-}Charles R{\'{e}}gin.
\newblock Generalized arc consistency for global cardinality constraint.
\newblock In William~J. Clancey and Daniel~S. Weld, editors, {\em Proceedings
  of the Thirteenth National Conference on Artificial Intelligence and Eighth
  Innovative Applications of Artificial Intelligence Conference, {AAAI} 96,
  {IAAI} 96, Portland, Oregon, USA, August 4-8, 1996, Volume 1}, pages
  209--215. {AAAI} Press / The {MIT} Press, 1996.
\newblock URL: \url{http://www.aaai.org/Library/AAAI/1996/aaai96-031.php}.

\bibitem{DBLP:journals/constraints/Sabharwal09}
Ashish Sabharwal.
\newblock Symchaff: exploiting symmetry in a structure-aware satisfiability
  solver.
\newblock {\em Constraints An Int. J.}, 14(4):478--505, 2009.
\newblock URL: \url{https://doi.org/10.1007/s10601-008-9060-1}, \href
  {https://doi.org/10.1007/S10601-008-9060-1}
  {\path{doi:10.1007/S10601-008-9060-1}}.

\bibitem{DBLP:series/faia/Sakallah21}
Karem~A. Sakallah.
\newblock Symmetry and satisfiability.
\newblock In {\em Handbook of Satisfiability - Second Edition}, volume 336 of
  {\em Frontiers in Artificial Intelligence and Applications}, pages 509--570.
  {IOS} Press, 2021.
\newblock \href {https://doi.org/10.3233/FAIA200996}
  {\path{doi:10.3233/FAIA200996}}.

\bibitem{seress_2003}
Ákos Seress.
\newblock {\em Permutation Group Algorithms}.
\newblock Cambridge Tracts in Mathematics. Cambridge University Press, 2003.
\newblock \href {https://doi.org/10.1017/CBO9780511546549}
  {\path{doi:10.1017/CBO9780511546549}}.

\bibitem{DBLP:journals/corr/abs-2005-08954}
Toby Walsh.
\newblock On the complexity of breaking symmetry.
\newblock {\em CoRR}, abs/2005.08954, 2020.
\newblock URL: \url{https://arxiv.org/abs/2005.08954}, \href
  {http://arxiv.org/abs/2005.08954} {\path{arXiv:2005.08954}}.

\bibitem{ZemKNTR}
Viktor~N Zemlyachenko, Nickolay~M Korneenko, and Regina~I Tyshkevich.
\newblock Graph isomorphism problem.
\newblock {\em Journal of Soviet Mathematics}, 29:1426--1481, 1985.

\end{thebibliography}
\bibliographystyle{plainurl}

\end{document}